\theoremstyle{definition}
\newtheorem{definition}{Definition}[section]
\newtheorem{theorem}{Theorem}[section]
\newtheorem{lemma}[theorem]{Lemma}
\newcommand{\R}{\mathbb{R}}
\newcommand{\G}{\mathcal{G}}
\newcommand{\T}{\mathcal{T}}
\newcommand{\E}{\mathrm{\textbf{E}}}
\newcommand{\Var}{\mathrm{\textbf{Var}}}
\newcommand{\Cov}{\mathrm{\textbf{Cov}}}
\newcommand{\etal}{\textit{et al}. }
\newcommand{\csdim}{c}
\newcommand{\msdim}{m}
\newcommand\CS{\mathrm{CS}}
\newcommand\HCS{\mathrm{HCS}}
\newcommand\CTS{\mathrm{CTS}}
\newcommand\IFFT{\mathrm{IFFT}}
\newcommand\FFT{\mathrm{FFT}}
\title{Higher-order Count Sketch: Dimensionality Reduction That Retains Efficient Tensor Operations}
\author{%
  Yang Shi\\
  University of California, Irvine\\
  Irvine, CA 92617 \\
  \texttt{shiy4@uci.edu} \\
  % examples of more authors
   \And
   Animashree Anandkumar \\
   California Institute of Technology \\
   Pasadena, CA, 91101 \\
   \texttt{anima@caltech.edu} \\
  % \AND
  % Coauthor \\
  % Affiliation \\
  % Address \\
  % \texttt{email} \\
  % \And
  % Coauthor \\
  % Affiliation \\
  % Address \\
  % \texttt{email} \\
  % \And
  % Coauthor \\
  % Affiliation \\
  % Address \\
  % \texttt{email} \\
}
\begin{document}

\maketitle

%The data is typically projected  to a smaller-dimension space using a hash function.  or streaming Efficient sketching methods have been developed for vector-valued  data. When the length of the data is large, the sketching process will suffer from storing a large-sized hash matrix. We  show that it yields an unbiased estimator of the original data. as long as the data entries are not heavily concentrated in nearby locations \aacomment{need better wording}

\begin{abstract}
Sketching is a randomized dimensionality-reduction method that aims to preserve relevant information in large-scale datasets. Count sketch is a simple popular sketch which uses a randomized hash function to achieve compression. 
In this paper, we propose a novel extension known as  Higher-order Count Sketch (HCS). 
While count sketch uses a single hash function, HCS uses multiple (smaller) hash functions for sketching.   HCS reshapes the input (vector) data into a higher-order tensor and employs a tensor product of the random hash functions to compute the sketch. 
This results in an exponential saving (with respect to the order of the tensor) in the memory requirements of the  hash functions, under certain conditions on the input data. Furthermore, when the input data itself has an underlying structure in the form of various tensor representations such as the Tucker decomposition, we obtain significant advantages. We derive efficient (approximate) computation of various tensor operations such as tensor products and tensor contractions directly on the sketched data. Thus, HCS is  the first sketch to fully exploit the multi-dimensional nature of higher-order tensors.
We apply HCS to tensorized neural networks where we replace fully connected layers with sketched tensor operations. We achieve nearly state of the art accuracy with significant compression on the image classification benchmark.
\end{abstract}

\section{Introduction}
%\yang{1. Dimension reduction is useful2. introduce some dimension reduction methods. data with latent structure--pca, data without latent structure---sketch 3. drawbacks of sketch: dimension reduction matrix is huge if data is huge. moreover, if data is higher order, treat it as vector is bad.}

%A popular class of   dimensionality reduction techniques involves spectral decomposition, such as  principle component analysis (PCA) and singular value decomposition (SVD). These techniques aim to fit the data using low-rank approximations, which can be interpreted as a low-dimensional latent structure in the data. They have shown good performance in  many applications such as topic modelling~\citep{topic} and recommendation systems~\citep{recommendation}.  However, for large-scale datasets, the exact computation of spectral decompositions is expensive and randomized methods are used instead.

%These hash functions  are data independent and .

Modern machine learning involves processing of large-scale datasets.  Dimensionality-reduction methods attempt to compress the input data while preserving relevant information. Sketching is a popular class of such techniques which aims to reduce memory and computational requirements by using simple randomized hash functions that map the input data to a reduced-sized output space.
Count Sketch (CS)~\citep{cs} is a simple sketch that has been applied in many settings such as estimation of internet packet streams~\citep{internet} and tracking most frequent items in a database~\citep{Cormode:2005}. It uses a simple data-independent random hash function and random signs to combine the input data elements. Despite its simplicity, it enjoys many desirable properties such as unbiased estimation and the ability to approximately perform certain operations directly on the low-dimensional sketched space, e.g., vector inner products and outer products.  However, CS is memory inefficient when the data is large. The bottleneck is that it needs to generate a hash table as large as the data size.

%Sketching techniques mostly focus on streaming data. \aacomment{let us not refer to streaming anywhere. just don't mention it}

%\aacomment{first talk about downsides of count sketch. and then also say it only works on vector data and then use next paragraph}

Another drawback of CS is that it assumes vector-valued data and does not exploit further structure  if data is multi-dimensional. But many modern machine learning and data mining applications involve manipulating large-scale multi-dimensional data. For instance, data can be multi-modal or multi-relational (e.g., a combination of image and text), and intermediate computations can involve higher-order tensor operations (e.g., layers in a tensorized neural network).  Memory, bandwidth, and computational requirements are usually bottlenecks when these operations are done at scale. Efficient dimensionality reduction schemes that exploit tensor structures can significantly alleviate this issue if they can find a compact representation while preserving accuracy.

%In most modern machine learning applications, we require efficient dimension reduction algorithms for a fix-sized block of data. For instance, in computer vision, we want to find a compact representation for different channel features that are given from neural networks. In this case, we either view the block of data as sets of vectors and do CS along each set or we flatten the whole block data as a vector and apply CS on it. However, this may be memory inefficient when the data is large. The bottleneck is that we need to generate a hash table as large as the data size when we compute the CS for the block. Moreover, it does not exploit the structure of the data when the data is multi-dimensional. Applying CS on tensors results in an exponential increase in memory as the tensor order increases.

%\yang{main contribution:1. introduce higher-order sketching, save hash matrix memory space when data has no structure. limitation: data evenly distributed, but can be solved.2. when data has structure, i.e. tensor, we can use higher-order sketching and estimate tensor operations on sketched component.3. experiments results.}

\textbf{Main contributions: }We extend count sketch to Higher-order Count Sketch (HCS), which is the first sketch to fully exploit the multi-dimensional nature of higher-order tensors.  It reshapes the input (vector) data to a higher-order tensor of a fixed order. It utilizes multiple randomized hash functions: one for each mode of the tensor. 
\begin{comment}
%\setlength{\columnsep}{1pt}%
\begin{wrapfigure}[11]{r}{0.5\textwidth}
  \centering
    \includegraphics[width = 4.5cm]{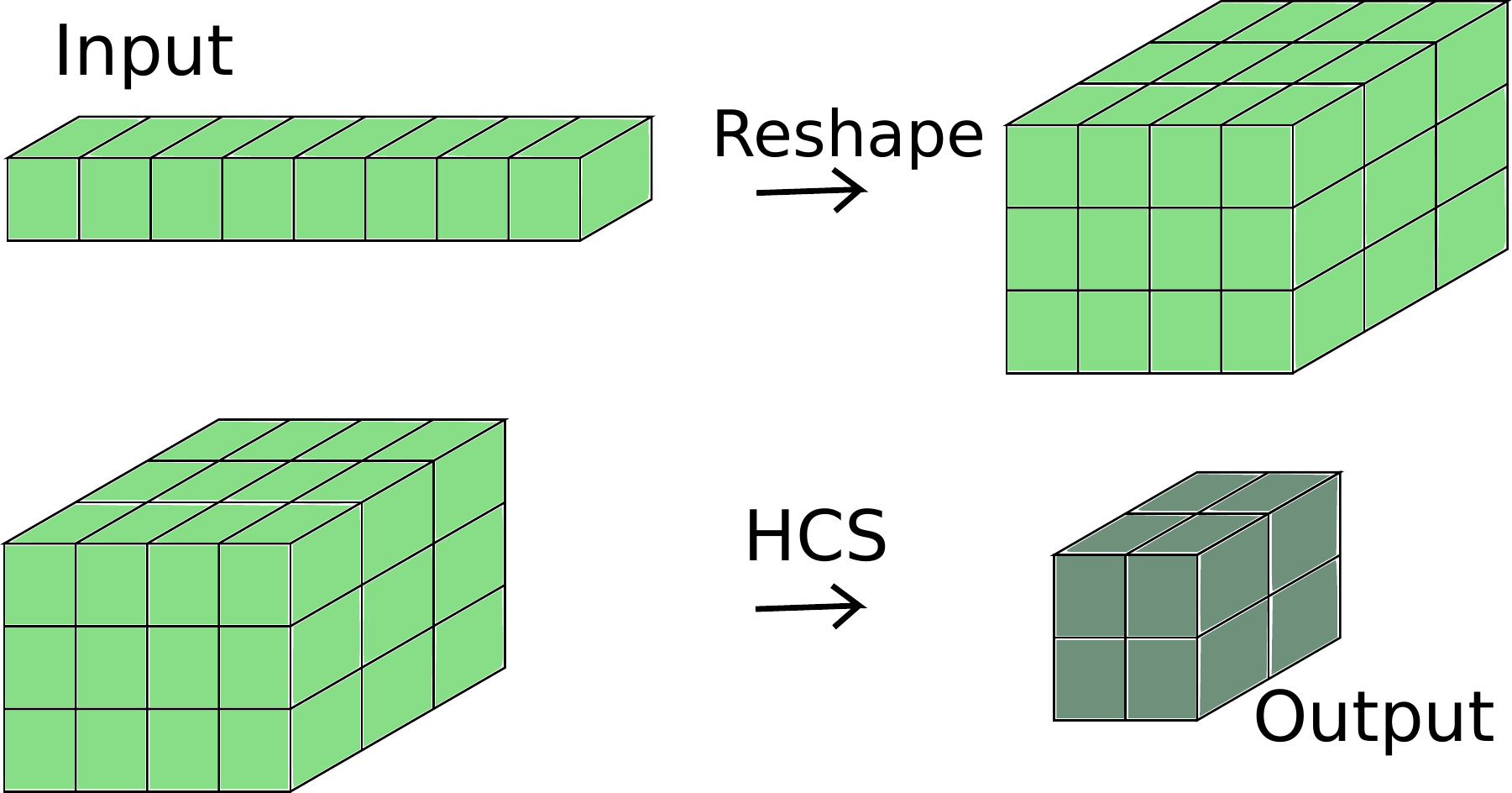}
  \caption{Higher-order count   sketch reshapes input vector into higher-order tensor and sketches it into a (smaller) tensor of the same order.}
\label{fig:overall}
\end{wrapfigure}
\end{comment}
The mapping in HCS is obtained by the tensor product of these hash functions. Figure~\ref{fig:overall} demonstrates the process. 
%\aacomment{can u change figure 1 and refer to it here. just go from vector to reshaping to tensor and then in next line show bigger tensor becoming smaller tensor. the figure right now is ugly and unclear. first draw the figure on paper and show it to me.} 
We show a memory saving in storing the hash map:  if the input data size is $O(d)$ and HCS uses $l$-th order tensor for sketching, we reduce the hash memory requirements from $O(d)$ to $O(l\sqrt[l]{d})$, compared to the count sketch, under certain conditions.  
 
% \aacomment{avoid saying projects to tensor. it doesn't. it just reshapes}
 
 %However, this benefit does not come for free. 

The conditions for obtaining the best-case memory savings from HCS are related to the concentration of input entries with large magnitudes and require these large entries to be sufficiently spread out. Intuitively, this is because the hash indices in HCS are correlated and we cannot have all the input to be clustered together. If we are allowed  multiple passes over the input data %(which is not the case with streaming data)
, a simple (in-place) reshuffle  to spread out the large entries will fix this issue, and thus allows us to obtain maximum memory savings in storing hash functions. 
 
%Due to the correlations between hash indices across different modes, HCS needs larger sketching space compared to CS  to keep the same estimation variance. Nevertheless, with some pre-processing to make sure the data value is evenly distributed, we can relax the requirement on sketching size. We propose a method to ensure the evenness of the data distribution and demonstrate the feasibility in real computation.

When the input data has further structure as a higher-order tensor, HCS is able to exploit it.  HCS allows for efficient (approximate) computations of tensor operations such as tensor products and tensor contractions by directly applying these operations on the sketched components. 
%We extensively study various tensor operation estimations. 
We obtain exponential saving with respect to the order of the tensor in the memory requirements for tensor product and contraction when compared to sketching using count sketch.
We also show $O(r^{N-1})$ times improvement in computation and memory efficiency for computing a $Nth$-order rank-$r$ Tucker tensor when compared to applying CS to each rank-1 component.  
%Moreover, when we perform tensor contraction, we require $O(\log c)$ times less computation in tensor contraction if the sketching size is $c$. All are compared to CS.
%\aacomment{mention here savings for product and contraction. }
The computation and memory improvement over CS of these operations are shown in Table~\ref{table:kron-summary}. 
%\aacomment{what about computing Tucker and CP decomposition. why is that not mentioned here?}

We compare HCS and CS for tensor product and tensor contraction compression using synthetic data. HCS outperforms CS in terms of computation efficiency and memory usage: it uses $200 \times$ less compression time  and $40 \times $ less memory while keeping the same recovery error, compared to CS. Besides, we apply HCS for approximating tensor operations in tensorized neural networks. These networks replace fully connected layers with multi-linear tensor algebraic operations. 
Applying HCS to tensor operations results in further compression while preserving accuracy. 
We obtain $90\%$ test accuracy on CIFAR10 dataset with $80\%$ memory savings  on the last fully connected layer, compared to the baseline ResNet18. 
%\aacomment{compared to what? mention it wrt to baseline not wrt tensorized network}

\begin{figure}
\begin{minipage}{0.45\textwidth}
  \centering
    \includegraphics[width = 4.8cm]{hcs.pdf}
  \captionof{figure}{Higher-order count   sketch reshapes input vector into higher-order tensor and sketches it into a (smaller) tensor of the same order.}
\label{fig:overall}
\end{minipage}
\hfill
\begin{minipage}{0.5\textwidth}
  \centering
  \includegraphics[width= 7cm]{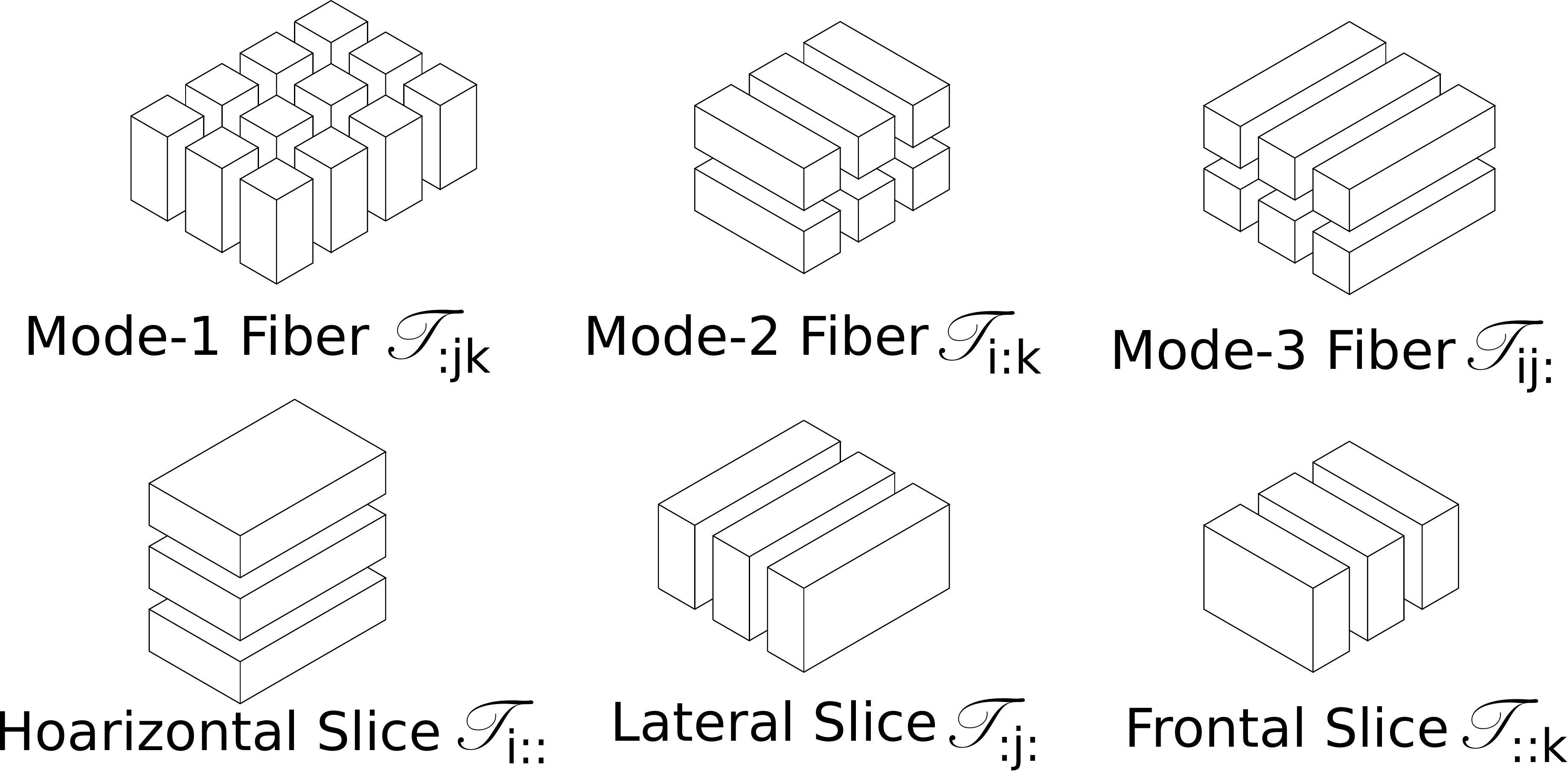}
\captionof{figure}{Fibers and slices of a third-order tensor.}
\label{fig:tensor-norm}
\end{minipage}
\end{figure}

\begin{comment}
\begin{table}[ht]
\centering
\caption{Computation and memory complexity on different tensor operations.(Assume $a_1 \cdots a_p,b_1 \cdots b_l$ is $O(n)$, $r_1 \cdots r_q$ is $O(r)$, $g = \max(p,l)$) }
\scalebox{1}{
\begin{tabular}{ |c|cc| } 
 \hline
%  \multicolumn{3}{|c|}{Kronecker Product: $A,B \in \R^{n \times n}$}\\
%  \hline
%  & Compute &Memory \\
%  $\HCS(A\otimes B)=\HCS(A)* \HCS(B)$ & $O(n^2+c\log c)$&$O(n+c)$ \\
% $\CS(A\otimes B)=\CS(A)* \CS(B)  $&$O(n^2+c\log c)$ &$O(n^2+c)$ \\
 %\hline
 \multicolumn{3}{|c|}{\textbf{Matrix Product:} $A \in \R^{n \times r}, B \in \R^{r \times n}$}\\
 \hline
& Compute &Memory \\
 $\CS(AB)=\sum_{i=1}^r\CS(A_{:i} \otimes B_{i:})$ & $O(nr+cr\log c)$&$O(c+n+cr)$ \\
 $\HCS(AB)= \HCS(A)\HCS(B)$& $O(nr+cr)$ &  $O(c+n+\sqrt{c}r)$ \\

 % \hline
%\multicolumn{3}{|c|}{\textbf{Tensor Product: $\mathcal{A} \in \R^{a_1 \times \cdots \times a_p}$, $\mathcal{B} \in \R^{b_1 \times \cdots \times b_l}$}}\\
% \hline
% & Compute & Memory \\
%\hline
%$\CS(\mathcal{A}\otimes \mathcal{B}) = \CS(vec(\mathcal{A})\otimes vec(\mathcal{B}) )$&$O(n^g+c\log c)$&$O(c+n^g)$\\
%\hline
%$\HCS(\mathcal{A}\otimes \mathcal{B}) = \HCS(\mathcal{A}) * \HCS(\mathcal{B})$&$O(n^g+c\log c)$&$O(c+gn)$\\
\hline
\multicolumn{3}{|c|}{\textbf{Tensor Contraction:} $\mathcal{A} \in \R^{a_1 \times \cdots a_p \times r_1 \times \cdots r_q}$, $\mathcal{B} \in \R^{r_1 \times \cdots  r_q \times b_1 \times \cdots b_l}$ } \\
\hline
 & Compute & Memory \\
$\CS(\mathcal{A} \mathcal{B}) = \sum_{i=1}^{q}\CS(\mathcal{A}_{r_i}\otimes \mathcal{B}_{r_i})$&$O(r^qn^g+cr^q\log c)$&$O(c+cr^q+n^g)$\\
$\HCS(\mathcal{A} \mathcal{B}) = \HCS(\mathcal{A}) \HCS(\mathcal{B})$&$O(r^qn^g+cr^q)$&$O(c+c^\frac{g}{p+l}r^q+gn)$\\
\hline 
%Tucker-form Tensor & $O(nr+cr^3)$&$O(nr^3+cr^3\log c)$&$O(c+n+\sqrt[3]{c}r)$&$O(c+n+cr^3)$\\ 
% \hline
%CP-form Tensor & $O(nr+cr)$& $O(nr+cr\log c)$&$O(c+n+\sqrt[3]{c}r)$&$O(c+n+cr)$\\ 
%Tucker $\mathcal{G}[A,B,C]$: $\mathcal{G} \in \R^{r \times r \times r}, A,B,C \in \R^{n \times r}$ &Compute &Memory  \\
%$\mathcal{G}[\HCS(A),\HCS(B),\HCS(C)$]&&\\
%$ \sum_{i=1}^r A_{:i} \otimes B_{:i} \otimes C_{:i}$ &&\\
 % \hline
\end{tabular}
}
\label{table:summary}
\end{table}
\end{comment}

\textbf{Related work:}
Singular value decomposition (SVD) is perhaps the most popular dimensionality reduction technique~\citep{svd}. However, when data is not inherently low rank or has other constraints such as  sparsity and non-negativity, SVD is not suitable. Other matrix decomposition techniques try to impose more structure on matrices~\citep{cur,cur-tensor,rpca}.

%Given the importance of data approximation for computational efficiency and privacy concerns, many methods have been developed to use fewer parameters in the estimation of the original data.
%ata with a particular structure may require a specific approximation method. Truncated singular value decomposition~\citep{svd}
%is an approximation method to represent the rotation and rescale of the data.
%It gets best rank-k approximation. Rotation and rescale represent the structure of the data in such case. 

%However, this decomposition may not fit for specific data that has sparsity/non-negativity constraints. 
%CX and CUR matrix decompositions are proposed to solve this problem~\citep{cur,cur-tensor}. 

In contrast to matrix techniques which make stringent assumptions on underlying structure, sketching is designed for compressing vector-valued data with almost no assumptions~\citep{Bringmann2017,sketch,featurehash}. Count Sketch (CS)~\citep{cs} was proposed to estimate the frequency of each element in a stream.~\citet{cmm} propose a fast algorithm to compute CS of an outer product of two vectors using FFT properties. They prove that the CS of the outer product is equal to the convolutions between the CS of each input. This allows for vector operations such as inner product and outer product to be directly computed in the sketch space.
%In CUR matrix decomposition, the original matrix is expressed using a small number of columns and rows carefully chosen from it. 
%CUR can also be extended to higher-order data by choosing fibers from each mode~\citep{cur-tensor}. 
%Since structure in the data may not be respected by mathematical operations on the data, 
%Sketching/hashing approaches are gaining popularity as a tool to reduce data dimensionality while preserving useful information in the data.
%Instead of only choosing parts of the data as sampling, sketching method requires mapping all elements in the data to a fixed dimension space.
Since then, many variants of count sketch have been proposed that preserve different properties of underlying data. Min-hash~\citep{one-bit} is a technique for estimating how similar two sets of data are. An extension of that is one-bit CP-hash~\citep{cphash} which generates concurrent hash table for multi-core processors. To make use of parallel computing resources, 2-of-3 cuckoo hashing~\citep{Amossen} is proposed based on cuckoo hashing~\citep{cuckoo}.

%It projects a vector $x \in \R^n$ to $y \in \R^c$. It requires two hash functions: $s$ contains either +1 or -1 for each input index. $h$ maps each index from input to an index in output. For every index i from input, $s[i]$ · $x[i]$ is added to $y[h[i]]$.  %See full definition for CS in Section~\ref{sec:cs}.

Sketching can also be applied to multi-dimensional data. Tensor sketch~\citep{ts-k} is proposed to approximate  non-linear kernels. 
%It can approximate any polynomial kernel in $O(N(n + c \log c))$ time, while previous methods require $O(Nnc)$ time for $N$ training samples in $n$-dimensional space and $c$ random feature maps. 
It has been applied to approximately compute tensor CP decomposition~\citep{ts,paraS} and Tucker decomposition~\citep{tucker-ts}.  %The definition for tensor sketch is in Appendix~\ref{app:notation}. 
~\cite{CBP} introduce compact bilinear pooling to estimate joint features from different sources.  In Visual Question Answering task, people use compact bilinear pooling to compute joint features from language and image~\citep{MCB}. However, all these sketching techniques are sketching tensors into a vector, which destroys their multi-dimensional structure. This does not make it possible to do tensor operations efficiently in the sketched space.

In addition to sketching, efficient multi-dimensional data operation primitives can boost the computation performance.
%Besides the above dimension reduction algorithms, efficient tensor contraction primitive is another focus in computation community. 
A Low-overhead interface is proposed for multiple small matrix multiplications on NVIDIA GPUs~\citep{gemm}. %Shi \etal~\cite{blas} further present optimized computation primitives for single-index contractions involving all the possible configurations of second-order and third-order tensors.
%\aacomment{you can't just mention your work. give credit to rich vuduc, john owens and also cutensor library} 
\cite{sparsetensormatrix, blas} optimize tensor matrix contraction on GPUs 
by avoiding data transformation.
High-Performance Tensor Transposition~\citep{hptt} is one of the open-source library that performs efficient tensor contractions. In future, we can leverage these advances to further speed up tensor sketching operations. 

\textbf{Important tensor applications:} We focus on tensor sketching because  data is inherently multi-dimensional in many settings.
In probabilistic model analysis, tensor decomposition is the crux of model estimation via the method of moments. A variety of models such as topic models, hidden Markov models, Gaussian mixtures etc.,   can be efficiently solved via the tensor decomposition techniques under certain mild assumptions~\citep{animajmlr}.  ~\cite{basketball} analyze spatio-temporal basketball data via tensor decomposition.
Tensor methods are also relevant in deep learning.
\cite{rose-rnn} learn the nonlinear dynamics in recurrent neural networks directly using higher-order state transition functions through tensor train decomposition.
%tensor decomposition is applied as a feature extractor. \citet{Vasilescu} use the Tucker decomposition to extract new representations for face images despite different expressions or camera viewpoints. %Tensor operations are also used in the modern deep learning area. 
\cite{jean} propose tensor contraction and  regression layers in deep convolutional neural networks. 
%Instead of mapping high-order activation tensor to vectors to pass through a fully connected layer, they generate  tensor weights to filter the multi-dimensional activation tensor. Moreover, the tensor product is  used to combine different features in multi-modal tasks. 
%Visual Question Answering task~\cite{paper:VQA} requires integration of feature maps from image and text that have drastically different structures. Many studies have been conducted to explore various features and combine them wisely based on their structures~\cite{MCB, winnercvpr, vqa-attention}. 
\section{Preliminaries}

\begin{comment}
\textbf{Tensors}
\label{sec:pre-tensor}
We denote vectors by lowercase letters, matrices by uppercase letters, and higher-order tensors(multi-dimensional data structures) by calligraphic uppercase letters.  The order of a tensor is the number of modes it admits. For example, $\T \in \R^{n_1 \times \cdots \times n_N}$ is an $Nth$-order tensor because it has N modes. 
%A scalar is a zeroth-order tensor, a vector is a first-order tensor, a matrix is a second-order tensor with the rows being the first mode and columns being the second mode, and a three-way array (say $\mathcal{A} \in \R^{m \times n \times p}$) is a third-order tensor with the first, second and third modes indexed by $m$, $n$, and $p$, respectively. 
\textit{Tensor product} is known as outer product in vectors case. An $N$th-order tensor is an element of the tensor product of $N$ vector spaces: $\mathcal{A} = v_1 \otimes \cdots \otimes v_N \in \R^{n_1 \times \cdots \times n_N}$, where $v_i \in \R^{n_i}, i \in {1,\dotsc,N}$. Assume $V_i \in \R^{n_i \times m_i}, i \in {1,\dotsc,N}$,  \textit{tensor contraction}  is denoted as $\mathcal{A}(V_1, \cdots V_N) \in \R^{m_1 \times \cdots \times m_N}$, and defined element-wise as:
\begin{equation}
 \mathcal{A}(V_1,\dotsc,V_N)_{i_1,\dotsc,i_N}=\sum_{j_1 \cdots j_N} \mathcal{A}_{j_1,\dotsc,j_N} V_{1j_1,i_1} \cdots V_{Nj_N,i_N}  
\end{equation}
%We show a tensor contraction example in Figure~\ref{fig:tensor-contraction}.
 
%In previous example, $\T  = \sum_{i = 1}^r  \mathcal{G}_{iii}  U_i \otimes V_i \otimes W_i$. 
\end{comment}

\textbf{Count Sketch}
\label{sec:pre-cs}
Count Sketch(CS)~\citep{cs} was first proposed to estimate most frequent data value in a streaming data.
\begin{definition}[Count Sketch]{}
Given two 2-wise independent random hash functions h:$[n]\to[c]$ and s:$[n] \to \{\pm 1\}$. Count Sketch of a vector $u \in \R^n$ is denoted by $CS(u) = \{CS(u)_1, \cdots ,CS(u)_c\} \in \R^{c}$ where $CS(u)_j := \sum_{h(i)=j} s(i)u_i$.
\end{definition}
In matrix format, we can write it as $CS(u) = H(s \circ u)$, where $H \in \R^{c \times n}$, $H(j,i) = 1$, if $h(i)=j$, for $\forall i \in [n]$,  otherwise $H(j,i) = 0$, $\circ$ is the sign for element-wise product.
The estimation can be made more robust by taking $b$ independent sketches of the input and calculating the median of the $b$ estimators.
\begin{comment}
\begin{theorem}[\cite{cs}]
\label{thm:1}
Given a vector $u \in \R^{n}$, CS hashing functions s and h with sketching dimension \csdim, for any $i^*$, the recovery function $\hat{u}(i^*) = s(i^*)CS(u)(h(i^*))$ computes an unbiased estimator for $u(i^*)$ with variance bounded by $||u||^2_2/\csdim$.
\end{theorem}
\end{comment}
\cite{cs} prove that the CS is an unbiased estimator with variance bounded by the 2-norm of the input. See Appendix~\ref{app:csms} for detailed proof. \cite{cmm} use CS and propose a fast algorithm to compute count sketch of an outer product of two vectors. 
\begin{equation}
\CS(u \otimes v) = \CS(u) * \CS(v)
%= IFFT(FFT(CS(u)) \circ FFT(CS(v)))
\label{eqn:cs-outerproduct}
\end{equation}
 The convolution operation (represented using $*$) can be transferred to element-wise product using FFT properties. Thus, the computation complexity reduces from $O(n^2)$ to $O(n+c\log c)$, if the vectors are of size $n$ and the sketching size is $c$.

Some notations we use in the following paper are:  $\hat{u}$: decompression of $u$, $[n]$: set of $\{1,2,\dots, n\}$.
\section{Higher-order count sketch on vector-valued data}
We denote vectors by lowercase letters, matrices by uppercase letters, and higher-order \textit{tensors} by calligraphic uppercase letters.  The \textit{order} of a tensor is the number of modes it admits. For example, $\T \in \R^{n_1 \times \cdots \times n_N}$ is an $Nth$-order tensor because it has N modes. A \textit{fiber} is the higher-order analogue of a matrix row or column in tensors. We show different ways to slice a third-order tensor in Figure~\ref{fig:tensor-norm}. \textit{The p-mode  matrix product} of a tensor $\T \in\R^{n_1 \times \cdots \times n_N}$ with a matrix $U \in \R^{m \times n_p}$ is denoted by $\T_{\times p}U $and is of size $n_1 \times \cdots n_{p-1} \times m\times n_{p+1}\times \cdots  \times n_N$. Element-wise it calculates: $(\T_{\times p}U)_{i_1\cdots i_{p-1} j i_{p+1} \cdots i_N} = \sum_{i_p = 1}^{n_p} \T_{i_1\cdots i_N}U_{ji_p}$.

\textbf{Higher-order count sketch(HCS)}
\label{sec:HCS-def}
Given a vector $u \in \R^{ d}$, random hash functions $h_{k}$:$[n_k]$ $\to$ $[m_k]$, $k \in [l]$, random sign functions $s_{k}$:$[n_k]$ $\to \{\pm 1\}$, $k\in [l]$, and $d = \prod_{k=1}^l n_k$, we propose HCS as:
\begin{equation}
\HCS(u)_{t_1, \cdots, t_l}  := \sum_{\mathclap{h_{1}(i_1)= t_1, \dotsc, h_{l}(i_l)= t_l}}s_{1}(i_1)\cdots s_{l}(i_l)u_{j}
\end{equation}
where $j = \sum_{k=2}^l{i_k\prod_{p=1}^{k-1}n_p }+i_1$. This is the index mapping between the vector with its reshaping result---a $lth$-order tensor with dimensions $n_k$ on each mode, for $ k \in [l]$.

Using tensor operations, we can denote HCS as:
\begin{equation}
\label{eqn:compact-ms}
\HCS(u) = (\mathcal{S} \circ reshape(u))_{\times 1}H_1 \dotsc_{\times l} H_l
\end{equation}
%A fiber is the higher order analogue of a matrix row or column in tensors.}
Here, $\mathcal{S} = s_1 \otimes \cdots \otimes s_l \in \R^{n_1 \times \cdots \times n_l}$, $H_k \in \R^{n_k \times m_k}$, $H_k(a,b) = 1$, if $h_k(a)=b$, otherwise $H_i(a,b) = 0$, for $\forall a \in [n_k], b \in [m_k], k \in [l]$. 
The $reshape(u)$ can be done in-place. We assume $u$ is a vectorized layout of a $lth$-order tensor. 
%with each mode dimension $n_k$, for $k \in [l]$. In real computation, we specify storage format as column-wise fiber by fiber.

%$\HCS(u) \in \R^{m_1 \times \cdots \times m_l}$ equals the signed tensor($S \circ \T$) contract with the hash matrices($H_i$, for $i \in [N]$) along each mode. %The resulting tensor $MS(\T) \in \R^{\msdim_1 \times \cdots \times \msdim_p}$ is an estimation of $\T$ with preserved spatial information. 
To recover the original tensor, we have 
\begin{equation}
\label{eqn:compact-HCS-vec-de}
\hat{u}_{j} = s_1(i_1)\cdots s_{l}(i_l)\HCS(u)_{h_1(i_1),\cdots,h_l(i_l)}
\end{equation}
Assume $\mathcal{T}_p$ is a $pth$-order tensor by fixing $l-p$ modes of a $lth$-order tensor $reshape(u)$ as shown in Figure~\ref{fig:tensor-norm}:
\begin{theorem}[\textbf{HCS recovery analysis}]
\label{thm:HCS-vec}
Given a vector $u \in \R^{d}$, assume $T_p$ is the maximum frobenium norm of all $\mathcal{T}_p$,
 Equation~\ref{eqn:compact-HCS-vec-de}
computes an unbiased estimator for $u_{j*}$ with variance bounded by:
\begin{equation}
\Var(\hat{u}_{j*}) = O(\sum_{p=1}^l\frac{T_p^2}{m^p})
\label{eqn:hcs-var}
\end{equation}
%Notice $\mathcal{W}_l = reshape(u)$. 

%If the input data $u$ is not evenly distributed, or for some $p^*$, $0<p^* <l$, $\norm{\mathcal{W}_{p^*}}_F \approx \norm{u}_2$, then  $\Var(\hat{u}_{j*}) = O(\frac{\norm{u}_2^2}{m^{p^*}})$. Otherwise, if the input data $u$ is evenly distributed in $reshape(u)$, or for any $p^*$, $0<p^* <l$, $\norm{\mathcal{W}_{p^*}}_F \ll \norm{u}_2$, then $\Var(\hat{u}_{j*}) = O(\frac{\norm{u}_2^2}{m^{l}})$.
\end{theorem}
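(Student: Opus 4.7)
The plan is to mirror the standard Count Sketch variance analysis but carry it out carefully over the $l$-dimensional hash index and the $l$ independent sign vectors. Write the estimator in expanded form
\[
\hat{u}_{j^*} \;=\; \sum_{i_1,\dotsc,i_l} \Big(\prod_{k=1}^l s_k(i^*_k)s_k(i_k)\Big)\Big(\prod_{k=1}^l \mathbf{1}[h_k(i_k)=h_k(i^*_k)]\Big)\, u_{j(i_1,\dotsc,i_l)},
\]
obtained by substituting the definition of $\HCS(u)$ into Equation~\ref{eqn:compact-HCS-vec-de}, using $s_k(i^*_k)^2=1$, and letting $j(\cdot)$ denote the reshape index map.

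First I would establish unbiasedness. The $i=i^*$ term equals $u_{j^*}$ exactly. For any other $i$, there is at least one coordinate $k$ with $i_k\neq i^*_k$; by 2-wise independence of $s_k$ (and independence of the $s_k$ across $k$) the factor $\E[s_k(i^*_k)s_k(i_k)]=0$, so that term vanishes in expectation. Hence $\E[\hat{u}_{j^*}]=u_{j^*}$.

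Next, for the variance, I would square the expansion and take expectation in two stages, first over the sign functions with the hash functions held fixed. Writing $\hat{u}_{j^*}^2$ as a double sum over $(i,i')$, the sign factor is $\prod_k s_k(i_k)s_k(i'_k)$, which by 2-wise independence has expectation $\prod_k \mathbf{1}[i_k=i'_k]$. Thus only the diagonal $i=i'$ survives, giving
\[
\E_s[\hat{u}_{j^*}^2] \;=\; \sum_{i} \Big(\prod_{k=1}^l \mathbf{1}[h_k(i_k)=h_k(i^*_k)]\Big)\, u_{j(i)}^2.
\]
Taking expectation over the hash functions, 2-wise independence of each $h_k$ yields $\Pr[h_k(i_k)=h_k(i^*_k)]=1/m_k$ whenever $i_k\neq i^*_k$. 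Assuming $m_k=m$, the probability contribution is $1/m^{|K|}$ where $K:=\{k:i_k\neq i^*_k\}$.

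The final step is to regroup the sum by $K\subseteq[l]$ and identify each group with a sub-tensor norm. For fixed $K$ with $|K|=p$, summing $u_{j(i)}^2$ over all $i$ with $i_k\neq i^*_k$ for $k\in K$ and $i_k=i^*_k$ for $k\notin K$ is at most the squared Frobenius norm of the $p$th-order slice of $reshape(u)$ obtained by fixing the $l-p$ modes outside $K$ to $i^*$, hence at most $T_p^2$. Subtracting the $K=\emptyset$ contribution $u_{j^*}^2$ to pass from $\E[\hat{u}_{j^*}^2]$ to the variance, and using $\binom{l}{p}=O(1)$ for fixed $l$, gives
\[
\Var(\hat{u}_{j^*}) \;\le\; \sum_{p=1}^{l} \binom{l}{p}\frac{T_p^2}{m^p} \;=\; O\!\Big(\sum_{p=1}^l \frac{T_p^2}{m^p}\Big).
\]
The only delicate point is the bookkeeping: carefully keeping the sign and hash randomness separated, and correctly identifying the restricted sums with the norms of the fiber/slice objects depicted in Figure~\ref{fig:tensor-norm} so that the $T_p^2$ definition from the theorem statement is applicable.
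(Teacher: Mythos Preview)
Your proposal is correct and follows essentially the same route as the paper: expand the estimator, use the 2-wise independence of the sign functions to kill off-diagonal terms, then use the hash collision probabilities $1/m$ per mismatched mode and group the resulting sum by the set $K$ of mismatched coordinates, bounding each group by the appropriate slice/fiber norm $T_{|K|}^2$. The paper carries this out only for $l=2$ (and asserts the extension), whereas you do general $l$ directly, but the argument is the same.
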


\begin{comment}
\begin{figure}[ht]
\begin{minipage}{.55\textwidth}
\centering
\includegraphics[width= 6cm]{tensor-norm.pdf}
\caption{Fibers and slices of a third-order tensor.}
\label{fig:tensor-norm}
\hfill
\end{minipage}
  \begin{minipage}{.45\textwidth}
    \centering
    \captionof{table}{Memory usage for same recovery error}
    \begin{tabular}{|c|c|c|}
    \hline
    & Hash memory & Output memory \\
    \hline
    $\CS$   & $O(d)$ &$O(c)$\\
   $ \HCS$   & $O(l\sqrt[l]{d})$  &$O(m^l)$\\
    \hline
    \end{tabular}
    \label{table:toy-example}
\end{minipage}
\end{figure}
\end{comment}

\textbf{Remarks}
Compared to CS, HCS requires less space for storing the hash functions.
Each mode only requires a $m_k \times n_k$ sized hash matrix with $n_k$ nonzero entries ($n_k = O(\sqrt[l]{d})$). Thus, HCS required $O(l\sqrt[l]{d})$ for hash memory while CS requires $O(d)$. If we choose $l = o(d)$, then $O(d) \gg O(l\sqrt[l]{d})$ and we save memory from using HCS.
 
 %Assume $m_k = O(m)$, the memory usage is summarized in Table~\ref{table:toy-example}. 

 %For instance, if $\mathcal{A}$ is a tensor with $O(d)$ elements, then $\TS(\mathcal{A})$ requires $O(d)$ for hashing while $\HCS(\mathcal{A})$ only needs $O(l\sqrt[l]{d})$ if we sketch $\mathcal{A}$ to a $lth$ order tensor. 

%\yang{4. how to solve even distributed data value issue.}

%As we discussed above, ensuring evenly distributed data is the main bottleneck for taking advantage of HCS.

The recovery variance calculates how likely input data collapse into the same place in the output. For example, in matrix case, to recover a specific data point, the variance depends on three parts: how likely the data from the same row but not same column, the data from the same column but not the same row and the data from different row and different column that get hashed into the same place as the specific data. The dominant term in Equation~\ref{eqn:hcs-var} will be ${\norm{u}_2^2}/{m^l}$ as long as all large entries are not clustered close to one another. Notice that CS has variance bounded by $\norm{u}_2^2/c$. We require $O(m^l) = O(c)$ to guarantee the same recovery, and that will lead to a total output memory $O(c)$ for HCS. 
%However, due to the correlations between hash indices across different modes, HCS needs larger sketching space compared to CS if large entries are clustered. 
In the worst case, when large magnitude data all locate in one fiber of $reshape(u)$, HCS has variance bounded by $\norm{u}_2^2/m$.  
 %if $m$ is the sketching dimension for each mode in HCS and $c$ is the sketching dimension in CS, 
 We require $O(m) = O(c)$ for the same recovery error. HCS output is of size $O(c^l)$ while CS output's size is $O(c)$.
 %Proof for Theorem~\ref{thm:HCS-vec} is in Appendix~\ref{app:csms}.
 
We present a simple way to reshuffle the data in-place. Step1: Sort $u$ in  descending order.
Step2: Rearrange sorted array in designed space $n_1 \times \dotsc \times n_l$ such that it goes diagonally from top to bottom and then consecutive anti-diagonally from bottom to top.
Step3: Rearrange the data according to Step2 (column-wise fiber by fiber). 
%We show the steps in Figure~\ref{fig:redistribute}.
We assume all data is well distributed in the rest analysis.

Another concern in HCS is how to choose the order of the reshaping tensor (parameter $l$). If the data values are fairly evenly distributed, we should select $l$ as large as possible (but sublinear in $d$). %However, if the data distribution is unknown and we are not allowed to rearrange the data, we have to choose a $l$ such that each fiber of the $lth$ order tensor (or each stride of length $\sqrt[l]{d}$ in the vector) is evenly spread out.

\section{Higher-order count sketch on tensors}
%\yang{1. define higher-order sketch on structured data(tensor) }

In the previous section, we discuss how to sketch a vector-valued data using HCS. In this section, we focus on tensor-valued data.
In order to use CS on higher-order tensors, we either view the tensor as a set of vectors and sketch along each fiber
 of the tensor or we flatten the tensor as a vector and apply CS on it.  Hence, CS
do not exploit tensors. Moreover, operations between tensors have to be performed on sketched vectors. This process is inefficient. But, with the help of HCS, we can compute  various operations such as  tensor products and tensor contractions by directly applying operations on the sketched components.

It is straightforward to apply HCS on tensors. Given a tensor $\T \in \R^{ n_1\times \cdots \times n_N}$, random hash functions $h_{k}$:$[n_k]$ $\to$ $[\msdim_k]$, $k \in [N]$, and random sign functions $s_{k}$:$[n_k]$ $\to \{\pm 1\}$, $k\in [N]$, HCS computes: $\HCS(\T) = (\mathcal{S} \circ \T)_{\times 1}H_1 \dotsc_{\times N} H_N$.
\begin{comment}
\begin{equation}
\label{eqn:compact-ms}
\HCS(\T) = (\mathcal{S} \circ \T)(H_1, \dotsc, H_N)
\end{equation}
\end{comment}
%Here, $\mathcal{S} = s_1 \otimes \cdots \otimes s_N \in \R^{n_1 \times \cdots \times n_N}$, $H_i \in \R^{n_i \times m_i}$, $H_i(a,b) = 1$, if $h_i(a)=b$, for $\forall a \in [n_i]$,  otherwise $H_i(a,b) = 0$. 
%$\HCS(\T) \in \R^{\msdim_1 \times \cdots \times \msdim_N}$ equals the signed tensor($S \circ \T$) contract with the hash matrices($H_i$, for $i \in [N]$) along each mode. %The resulting tensor $MS(\T) \in \R^{\msdim_1 \times \cdots \times \msdim_p}$ is an estimation of $\T$ with preserved spatial information. 
To recover the original tensor, we have: $\hat{\T} = \mathcal{S} \circ \HCS(\T)_{\times 1}H_1^{T}, \cdots_{\times N} H_N^{T}$.
$\mathcal{S}$ and $H_i$ are defined as same as in Section~\ref{sec:HCS-def}.

\begin{comment}
\begin{equation}
\label{eqn:compact-ms-de}
\hat{\T} = \mathcal{S} \circ \HCS(\T)(H_1^{T}, \cdots, H_N^{T})
\end{equation}

\end{comment}

%\yang{ efficient tensor operations: tensor product and tensor contraction. compare with CS. }

%\aacomment{don't say approximate ... in title. it is understood}

\subsection{Tensor product}
\textit{Tensor product} is known as outer product in vectors case.
It computes every bilinear composition from inputs. We denote the operation with $\otimes$. The tensor product result has dimension equal to the product of the dimensions of the inputs.
It has been used in a wide range of applications such as bilinear models~\citep{bilinear}.
\cite{cmm} shows that the count sketch of an outer product equals the convolution between the count sketch of each input vector as shown in Equation~\ref{eqn:cs-outerproduct}. Furthermore, the convolution in the time domain can be transferred to the element-wise product in the frequency domain. We extend the outer product between vectors to tensor product.

\begin{lemma}
\label{lemma:tensorproduct}
Given a $pth$-order tensor $\mathcal{A}$, a $qth$-order tensor $\mathcal{B}$, assume $p > q$: 
\begin{equation}
\begin{split}
\HCS(\mathcal{A} \otimes \mathcal{B}) &= \HCS(\mathcal{A}) * \HCS(\mathcal{B}) \\
&= IFFT(FFT(\HCS(\mathcal{A})) \circ FFT(\HCS(\mathcal{B})))  
\end{split}
\end{equation}
\end{lemma}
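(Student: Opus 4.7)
The plan is to lift the vector outer-product identity $\CS(u\otimes v) = \CS(u)*\CS(v)$ from equation~\ref{eqn:cs-outerproduct} to the tensor setting, applying the CMM-style convolution argument mode by mode, and then converting the resulting multi-dimensional circular convolution into a frequency-domain elementwise product via the multi-dimensional convolution theorem.

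First I would unfold $\HCS(\mathcal{A}\otimes\mathcal{B})$ elementwise using the tensor HCS definition from Section~\ref{sec:HCS-def}. Since the entries of the outer product factor as $(\mathcal{A}\otimes\mathcal{B})_{i_1,\ldots,i_p,j_1,\ldots,j_q} = \mathcal{A}_{i_1,\ldots,i_p}\mathcal{B}_{j_1,\ldots,j_q}$, and the per-mode hash and sign functions on the modes inherited from $\mathcal{A}$ are chosen independently from those on the modes inherited from $\mathcal{B}$, the sum defining each output entry splits into a product of a sum over $\mathcal{A}$-indices and a sum over $\mathcal{B}$-indices. Each such sum is exactly an HCS partial sum, and, mode by mode, the hashed output index decomposes as a modular sum of one hashed index from $\mathcal{A}$ and one from $\mathcal{B}$; this is precisely the reason the convolution structure appears in the vector case, and the same derivation goes through independently in every mode.

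Collecting the $l$ per-mode circular convolutions yields $\HCS(\mathcal{A}\otimes\mathcal{B}) = \HCS(\mathcal{A})*\HCS(\mathcal{B})$, where $*$ is interpreted as mode-wise circular convolution. The second equality, converting $*$ into $\IFFT(\FFT(\cdot)\circ\FFT(\cdot))$, is then an immediate consequence of the multi-dimensional convolution theorem applied along each of the $l$ modes, which is the natural generalization of the 1-D FFT trick used by \cite{cmm} to bring the $O(n+c\log c)$ speedup to the vector case. No new probabilistic content beyond pairwise independence of the hash families is needed.

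The main obstacle, as I see it, is bookkeeping rather than a deep new idea: one has to fix a convention for how a $p$-order sketch is convolved with a $q$-order sketch when $p>q$ (the natural reading is to broadcast the lower-order sketch across the extra $p-q$ modes, equivalently to view both sketches as living in a common ambient sketch space with singleton modes), and one has to match the hash functions assigned to the modes of $\mathcal{A}\otimes\mathcal{B}$ with those used for $\mathcal{A}$ and $\mathcal{B}$ individually so that the modular-sum identity at each mode is exactly what appears inside the definition of $*$. Once the indexing convention is pinned down, the factorization and the per-mode convolution theorem deliver both equalities of the lemma by the same short calculation as in the vector case, repeated once per mode.
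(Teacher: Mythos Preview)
Your factorization step as written does not produce a convolution. You index the product as $(\mathcal{A}\otimes\mathcal{B})_{i_1,\ldots,i_p,j_1,\ldots,j_q}$ with $p+q$ separate modes and assign each mode its own hash and sign. Under that setup the sum you write down factors exactly, but what it factors into is
\[
\HCS(\mathcal{A}\otimes\mathcal{B})_{t_1,\ldots,t_p,u_1,\ldots,u_q}
=\HCS(\mathcal{A})_{t_1,\ldots,t_p}\cdot\HCS(\mathcal{B})_{u_1,\ldots,u_q},
\]
i.e.\ $\HCS(\mathcal{A})\otimes\HCS(\mathcal{B})$, a $(p+q)$-order tensor. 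No modular sum of hashed indices appears anywhere in this computation, so the sentence ``mode by mode, the hashed output index decomposes as a modular sum of one hashed index from $\mathcal{A}$ and one from $\mathcal{B}$'' is not supported by your setup. The $p$-dimensional FFT in the lemma statement is already a signal that the sketch of the product is meant to be $p$-th order, not $(p+q)$-th order.

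The paper's proof (given for the matrix Kronecker case in Appendix~\ref{app:krocnocker}) makes this explicit: mode $k$ of $\mathcal{A}\otimes\mathcal{B}$ is the \emph{combined} index built from mode $k$ of $\mathcal{A}$ and mode $k$ of $\mathcal{B}$, and the hash and sign on that combined mode are \emph{defined} as $h_k^{A\otimes B}=h_k^A+h_k^B\pmod{m_k}$ and $s_k^{A\otimes B}=s_k^A\cdot s_k^B$. With that definition in hand, the paper works directly in the Fourier domain: the kernel $w^{t_k(h_k^A+h_k^B)}$ splits as $w^{t_k h_k^A}\,w^{t_k h_k^B}$, and the double sum over $(i_k,j_k)$ factors into $\FFT(\HCS(\mathcal{A}))\circ\FFT(\HCS(\mathcal{B}))$. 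That choice of combined hash is the substantive step that manufactures the circular convolution; it is not downstream bookkeeping, and your outline does not yet contain it.
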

$FFT$ and $IFFT$ are p-dimensional Fourier transform and inverse Fourier transform if the input is a $pth$-order tensor. The proof is given in Appendix~\ref{app:krocnocker}.

We use the Kronecker product, which is a generalization of the outer product from vectors to matrices to compare tensor product approximation using HCS and CS. 

Assume inputs are $A,B \in \R^{n \times n}$: Given Lemma~\ref{lemma:tensorproduct}, this approximation requires $O(n^2)$ to complete $\HCS(A)$, $\HCS(B)$ and $O(m^2\log m)$ to complete 2D Fourier Transform if the HCS sketching parameter is $m$ for each mode.  It requires $O(m^2)$ memory for final representation and $O(n)$ for hashing parameters along each mode.

\textbf{Baseline CS operation}
We flatten $A$ and $B$ as vectors and apply CS on the vector outer product. 
The computation complexity is $O(n^2+c\log c)$ and the memory complexity is $O(c+ n^2)$. It needs $O(n^2)$ for hashing memory because we have $O(n^2)$ elements in the vectorized matrix. 

HCS requires approximately $O(n)$ times less memory comparing to CS for two $n \times n$ matrix Kronecker product. 
See Table~\ref{table:kron-summary} for detailed comparisons.

\begin{comment}
\begin{figure}[ht]
    \centering
    \includegraphics[width=\linewidth]{kronecker.pdf}
    \caption{Approximate Kronecker product.}
    \label{fig:kronecker-whole}
\end{figure}
\end{comment}
\begin{comment}
\begin{figure}[ht] 
  \begin{subfigure}[b]{0.55\linewidth}
    \centering
    \includegraphics[width=7cm]{kronecker-ts-new.png}
     \caption{CS of Kronecker product. %It requires $O(n^2+m^2\log m)$ computations.
     }\label{fig:kronecker-ts}
  \end{subfigure}%%
  \begin{subfigure}[b]{0.4\linewidth}
    \centering
    \includegraphics[width=\textwidth]{kronecker-ms-new.png}
     \caption{HCS of Kronecker product. %It requires $O(n^2+m^2\log m)$ computations.
     }\label{fig:kronecker-ms}
  \end{subfigure} 
  \caption{Approximate Kronecker product.} %of two matrices with size $n \times n$. It requires $O(n^4)$ computation}
  \label{fig:kronecker-whole} 
\end{figure}
\end{comment}

\begin{comment}
\begin{figure}[ht] 
  \begin{subfigure}[b]{0.3\linewidth}
    \centering
    \includegraphics[width = 0.8\textwidth]{kronecker-ori.png}
     \caption{Kronecker product.}\label{fig:kronecker-ori}
  \end{subfigure}%% 
  \begin{subfigure}[b]{0.35\linewidth}
    \centering
    \includegraphics[width=\textwidth]{kronecker-ts-new.png}
     \caption{CS of Kronecker product. %It requires $O(n^2+m^2\log m)$ computations.
     }\label{fig:kronecker-ts}
  \end{subfigure}%%
  \vspace{2ex}
  \begin{subfigure}[b]{0.3\linewidth}
    \centering
    \includegraphics[width=\textwidth]{kronecker-ms-new.png}
     \caption{HCS of Kronecker product. %It requires $O(n^2+m^2\log m)$ computations.
     }\label{fig:kronecker-ms}
  \end{subfigure} 
  \caption{Approximate Kronecker product.} %of two matrices with size $n \times n$. It requires $O(n^4)$ computation}
  \label{fig:kronecker-whole} 
\end{figure}
\end{comment}

\begin{table}[ht]
\caption{Computation and memory analysis of various operation estimation (Results select sketch size to maintain the same recovery error for CS and HCS)}
\centering
\scalebox{0.95}{
\begin{tabular}{ |c c c|} 
 \hline
Operator & {Computation} & {Memory} \\
\hline
$\CS(A\otimes B)$  & $O(n^2+c\log c)$ &$O(c+ n^2)$\\ 
$\HCS(A\otimes B)$ &$ O(n^2+c\log c)$  &$O(c+n)$\\ 
\hline
$\CS(AB)$ &  $O(nr+cr\log c)$  & $O(c+ n+cr)$\\
$\HCS(AB)$  &$ O(nr+cr)$ &$O(c+n+\sqrt{c}r)$\\ 
 \hline
 $\CS(Tucker(\T))$ &$O(nr^3+cr^3\log c)$&$O(c+n+cr^3)$\\
  $\HCS(Tucker(\T))$&$O(nr+cr^3)$&$O(c+n+\sqrt[3]{c}r)$ \\
  \hline
\end{tabular}
}
\label{table:kron-summary}
\end{table}
\subsection{Tensor contraction}
\label{sec:tce}
Matrix product between $A \in \R^{m \times r}$ and $B \in \R^{r \times n}$ is defined as $C=AB= \sum_{i=1}^{r}A_{:i} \otimes B_{i:}$, $C \in \R^{m \times n} $.
\textit{Tensor contraction} (used more often as Einstein summation in physics community) can be seen as an extension of matrix product in higher-dimensions. It is frequently used in massive network computing.
%It includes \textit{the p-mode matrix product} of a tensor as we defined in Section~\ref{sec:pre-tensor}. But the contraction can also be performed between tensors. 
We define a general tensor contraction between $\mathcal{A}\in \R^{a_1 \times \cdots \times a_p} $ and $\mathcal{B}\in \R^{b_1 \times \cdots \times b_q}$ as
\begin{equation}
    \mathcal{C}_{L}= \mathcal{A}_{P}\mathcal{B}_{Q}=   \mathcal{A}_{{M}{R}}\mathcal{B}_{{R}{N}} = \sum_{{R}}\mathcal{A}_{:{R}} \otimes \mathcal{B}_{{R}:}
    \label{eqn:tensorcontraction}
\end{equation}
 where ${P},{Q},{L}$ are ordered sequences of indices such that $P=\{a_1 \times \cdots \times a_p\}$, $Q=\{b_1 \times \cdots \times b_q\}$, $L = ({P} \cup {Q}) \backslash ({P} \cap {Q})$, $R = P \cap Q $, $M = {P} \backslash ({P} \cap {Q})$, ${N} = {Q} \backslash ({P} \cap {Q})$. The indices in ${R}$ are called
contracted indices. The indices in ${L}$ are called free indices. 

\begin{lemma}
\label{lemma:tensorcontraction}
Given tensors $\mathcal{A} \in \R^{{P}}$, $\mathcal{B}\in \R^{{Q}}$, contraction indices ${L}$, if hash matrices $H_i= I$, $ \forall i \in {L}$:
\begin{equation}
\HCS(\mathcal{A}_{P}\mathcal{B}_{Q}) = \HCS(\mathcal{A})\HCS(\mathcal{B})
\end{equation}
\end{lemma}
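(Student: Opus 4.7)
The plan is to unroll both sides via the $\HCS$ definition and observe that the random signs attached to the contracted modes appear in both sketches, so they square to $+1$ and disappear, leaving the exact contraction on the free modes. I read the hypothesis as placing identity hashes on the \emph{contracted} modes $R$ rather than on the free modes $L$; this is the only reading under which the product $\HCS(\mathcal{A})\HCS(\mathcal{B})$ is even well-defined as a contraction over modes of matching dimension, and it matches the memory count $O(\sqrt{c}\,r)$ reported for $\HCS(AB)$ in Table~\ref{table:kron-summary}.

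First I would fix notation: write $M = P \setminus R$ and $N = Q \setminus R$ for the free-mode index sets of $\mathcal{A}$ and $\mathcal{B}$, so that the output tensor $\mathcal{C}$ has index set $L = M \cup N$. The implicit consistency condition built into the expression $\HCS(\mathcal{A})\HCS(\mathcal{B})$ is that the same hash/sign pair $(h_r, s_r)$ is used on the $r$-th contracted mode of both sketches; I would state this up front before computing.

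Next, I would expand one entry of the right-hand side. Using Equation~\ref{eqn:compact-ms} on each input,
\begin{equation*}
[\HCS(\mathcal{A})\,\HCS(\mathcal{B})]_{t_M, t_N} \;=\; \sum_{i_R} \HCS(\mathcal{A})_{t_M, i_R}\, \HCS(\mathcal{B})_{i_R, t_N},
\end{equation*}
where each factor is a signed sum over preimages of the hash on the free modes, multiplied by the sign $s_R(i_R)$ on the unhashed contracted modes. Interchanging summations pulls the factor $s_R(i_R)^2 = 1$ out of the picture and leaves the inner sum $\sum_{i_R} \mathcal{A}_{j_M, i_R}\mathcal{B}_{i_R, j_N} = \mathcal{C}_{j_M, j_N}$, which is the definition of tensor contraction in Equation~\ref{eqn:tensorcontraction}. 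The outer structure is then exactly $\HCS(\mathcal{C})$ evaluated at $(t_M, t_N)$ by a second application of Equation~\ref{eqn:compact-ms}.

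The calculation is essentially index bookkeeping and the only real content is the sign cancellation $s_R(i_R)^2 = 1$, which is why the contracted modes must use shared randomness across the two inputs. The main pitfall I foresee is keeping the free/contracted partitions straight and making the shared-hash convention explicit at the outset; once that is done, identifying the final sum with $\HCS(\mathcal{C})$ is immediate.
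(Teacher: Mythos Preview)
Your proposal is correct and follows essentially the same argument as the paper. The paper reduces to the matrix case by grouping free and contracted indices, then uses the compact form $\HCS(A)=H_1(s_1\otimes s_2\circ A)H_2^T$ together with $H_2^TH_2=I$ and $(s_1\otimes s_2\circ A)(s_2\otimes s_3\circ B)=s_1\otimes s_3\circ AB$; your entry-wise computation is the same cancellation $s_R(i_R)^2=1$ written out explicitly, and your correction that the identity hashes must sit on the contracted modes $R$ (not $L$) matches the paper's own remark immediately after the lemma.
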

 We require the hash matrices for the contraction modes be identity matrices. In other words, we are not compressing along the modes that are being multiplied. The proof is in Appendix~\ref{app:ab}.

\textbf{Baseline CS operation} To apply CS on tensor contraction, we have to apply CS to each addition term in Equation~\ref{eqn:tensorcontraction}.
Take matrix product as an example:
\begin{equation}
    \CS(AB) = \CS(\sum_{i=1}^{r}A_{:i} \otimes B_{i:}) = \sum_{i=1}^{r} \CS(A_{:i} \otimes B_{i:}) = \sum_{i=1}^{r} \CS(A_{:i}) * \CS(B_{i:}) 
\end{equation}
We summarize computation and memory requirements for matrix product in Table~\ref{table:kron-summary}.

\begin{comment}
\begin{figure}[ht]
    \centering
    \includegraphics[width=\linewidth]{mm.pdf}
    \caption{Matrix contraction sketching.}
    \label{fig:mm-whole}
\end{figure}
\end{comment}
\begin{comment}
 \begin{figure}[h]
\begin{subfigure}[b]{0.6\linewidth}
\centering
     \includegraphics[width=0.9\textwidth]{mm-ts.png}
     \caption{CS of matrix contraction. %It requires $O(n^2+m^2\log m)$ computations.
     }\label{fig:mm-ts}
\end{subfigure}
\hfill
   \begin{subfigure}[b]{0.4\linewidth}
     \centering
     \includegraphics[width=\textwidth]{mm-ms.png}
     \caption{HCS of matrix contraction. %It requires $O(n^2+m^2\log m)$ computations.
     }\label{fig:mm-ms}
   \end{subfigure}
   \caption{Approximate matrix contraction.}
\end{figure}
\end{comment}

\subsection{Tucker-form tensor}
%In the following analysis, we use a third-order tensor as an example. 

\textit{Tensor decomposition} is an extension of 
matrix decomposition to higher-orders. The Tucker decomposition~\citep{Tucker:1966:Psy} is analogous to principal component analysis. It decomposes a tensor as a core tensor contracted with a matrix along each mode. For instance, a third-order tensor $\T \in \R^{n_1 \times n_2 \times n_3}$ has the Tucker decomposition: $\T = \G_{\times 1}U_{\times 2} V_{\times 3} W$, where $\G \in \R^{r_1 \times r_2 \times r_3}$, $U \in \R^{n_1 \times r_1}$, $V \in \R^{n_2 \times r_2}$, $W \in \R^{n_3 \times r_3}$.
CANDECOMP/PARAFAC(CP)~\citep{Harshman:1970:UCLA} is a special case of a Tucker-form tensor, where the core tensor is a sparse tensor that only has non-zero values on the superdiagnoal. It can be represented as a sum of rank-1 tensors: $\T  = \sum_{i = 1}^r  \mathcal{G}_{iii}  U_i \otimes V_i \otimes W_i$.
Tensor decomposition has been applied in many field such as data mining~\citep{kolda-datamining} and latent variable models~\citep{animajmlr}.

\begin{lemma}
\label{lemma:tucker}
Given a Tucker tensor $\T = \G_{\times 1}U_{\times 2} V_{\times 3} W \in \R^{n \times n \times n}$, where 
%$U \in \R^{n_1\times r_1}$, $V \in \R^{n_2 \times r_2} $, $W \in \R^{n_3 \times r_3}$, 
$\mathcal{G }\in \R^{r \times r \times r}$:
The higher-order CS of a Tucker-form tensor can be accomplished by performing HCS on each factor:
\begin{equation}
\label{eqa:ms}
\HCS(\mathcal{T}) = \mathcal{G}_{\times 1} \HCS(U)_{\times 2} \HCS(V)_{\times 3} \HCS(W)
\end{equation}
\end{lemma}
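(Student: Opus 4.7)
The plan is to unfold the Tucker decomposition inside the definition of HCS and then regroup the sums. First I would write the tensor element-wise as $\T_{i_1 i_2 i_3} = \sum_{a,b,c} \G_{abc}\, U_{i_1 a} V_{i_2 b} W_{i_3 c}$, exhibiting $\T$ as a linear combination of rank-one outer products $U_{:a}\otimes V_{:b}\otimes W_{:c}$ weighted by the core entries. Linearity of the hash (the sign mask is fixed and the multi-mode projection is linear) will let me push the hash operation inside this sum.

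Next, I would apply the definition of $\HCS$ on a third-order tensor directly: the $(t_1,t_2,t_3)$ entry is $\sum_{h_k(i_k)=t_k} s_1(i_1)s_2(i_2)s_3(i_3)\,\T_{i_1 i_2 i_3}$. Substituting the Tucker expansion and swapping the order of summation moves $\G_{abc}$ outside and, crucially, factorizes the three hash-indexed sums over $i_1, i_2, i_3$ into a product, because each factor matrix contributes to exactly one mode. Each inner factor then matches the definition of the HCS of a matrix when the hash on the rank dimension is taken to be the identity, i.e.\ $\HCS(U)_{t_1,a} = \sum_{i_1:h_1(i_1)=t_1} s_1(i_1) U_{i_1 a}$, and similarly for $V$ and $W$. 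Recognizing the resulting triple sum
\begin{equation*}
\HCS(\T)_{t_1 t_2 t_3} = \sum_{a,b,c} \G_{abc}\, \HCS(U)_{t_1,a}\,\HCS(V)_{t_2,b}\,\HCS(W)_{t_3,c}
\end{equation*}
as the element-wise expansion of $\G_{\times 1}\HCS(U)_{\times 2}\HCS(V)_{\times 3}\HCS(W)$ closes the argument.

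There is no real analytic obstacle here; the lemma is essentially a consequence of the linearity of HCS together with the multiplicative tensor-product structure of its hash, and the proof is mostly bookkeeping. The one point that needs care is the choice of hash functions on the rank dimensions of the factor matrices: they must be taken to be the identity (with trivial sign), so that no mixing occurs across rank indices and the contraction with $\G$ commutes past the sketch — precisely the setting of Lemma~\ref{lemma:tensorcontraction} specialized to the three matrix-factor contractions in the Tucker form. Without this convention, Equation~\ref{eqa:ms} would fail to be an exact identity and would instead become an approximate relation involving additional convolution along the contracted modes.
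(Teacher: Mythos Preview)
Your argument is correct and complete. The paper does not supply a separate proof of Lemma~\ref{lemma:tucker}; it is implicitly a corollary of Lemma~\ref{lemma:tensorcontraction} (proved in the matrix case as Lemma~\ref{lemma:ab} in Appendix~\ref{app:ab}) applied once to each of the three mode products, together with the observation that every mode of $\G$ is a contracted mode and hence carries the identity hash, so $\HCS(\G)=\G$. Your element-wise expansion is precisely this reasoning unfolded from the definitions rather than invoked as a black-box corollary, and your remark that the rank-mode hashes and signs must be taken trivial is exactly the convention required for Equation~\eqref{eqa:ms} to hold with the unsketched core $\G$ on the right-hand side.
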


\begin{comment}
\begin{figure}
    \centering
    \includegraphics[width= \linewidth]{tucker-sketch-try-new.pdf}
    \caption{Sketch of a third-order Tucker-form tensor.}
    \label{fig:tucker-sketch}
\end{figure}
\end{comment}
\textbf{ Baseline CS operation} To apply CS to a Tucker-form tensor, we rewrite the decomposition as a sum of rank-1 tensors: 
\begin{align}
\label{eqn:cts-tucker}
\CS(\mathcal{T}) &= \sum_{a = 1}^r  \sum_{b = 1}^r  \sum_{c = 1}^r \mathcal{G}_{abc} \CS(U_a \otimes V_b \otimes W_c ) 
\end{align} 
where $U_a, V_b,W_c$ are $a^{th},b^{th},c^{th}$ column of $U,V,W$ respectively.

We show computation and memory analysis in Table~\ref{table:kron-summary}. In addition, a CP-form tensor can be sketched in the same way as we described above when using HCS. For using CS: instead of summing over all $\mathcal{G}$ values, we sum over only $r$ number of $\mathcal{G}$ values. The analysis can also be easily  extended to higher-order tensors.

We summarize the general tensor product and tensor contraction estimation process in Table~\ref{table:general-summary}.

\begin{table}[H]
\captionof{table}{General tensor operation estimation (Assume ${A}$ is a set of indices  with length $p$, ${B}$ is a set of indices with length $q$, each index value $O(n)$, assume the size of ${R}$ is $l$ with each index value $O(r)$, $g = \max(p,q)$) }
\centering
\begin{tabular}{ |c c  c |} 
 \hline
\multicolumn{3}{|c|}{\textbf{Tensor Product: $\mathcal{A} \in \R^{{A}}$, $\mathcal{B} \in \R^{{B}}$}}\\
 \hline
Operator & Computation & Memory \\
\hline
$\CS(\mathcal{A}\otimes \mathcal{B}) = \CS(vec(\mathcal{A})\otimes vec(\mathcal{B}) )$&$O(n^g+c\log c)$&$O(c+n^g)$\\
\hline
$\HCS(\mathcal{A}\otimes \mathcal{B}) = \HCS(\mathcal{A}) * \HCS(\mathcal{B})$&$O(n^g+c\log c)$&$O(c+gn)$\\
%\hline
%\multicolumn{3}{|c|}{\color{blue}Setting $O(c) = O(m^g)$ ensures same recovery error.} \\
\hline
%\multicolumn{3}{|c|}{\textbf{Tensor Contraction:} $\mathcal{A} \in \R^{a_1 \times \cdots \times a_p \times r_1 \times \cdots \times r_q}$, $\mathcal{B} \in \R^{r_1 \times \cdots \times r_q \times b_1 \times \cdots b_l}$ } \\
\multicolumn{3}{|c|}{\textbf{Tensor Contraction:} $\mathcal{A} \in \R^{{A}}$, $\mathcal{B} \in \R^{{B}}$ with contraction indices ${R}$} \\
\hline
Operator & Computation & Memory \\
\hline
$\CS(\mathcal{A} \mathcal{B}) = \sum_{{R}}\CS(A_{:{R}}\otimes B_{{R}:})$&$O(r^ln^g+cr^l\log c)$&$O(c+cr^l+n^g)$\\
\hline
$\HCS(\mathcal{A} \mathcal{B}) = \HCS(\mathcal{A}) \HCS(\mathcal{B})$&$O(r^ln^g+cr^l)$&$O(c+c^{\frac{g}{p+q}}r^l+gn)$\\
\hline
%\multicolumn{3}{|c|}{\color{blue}Setting $O(c) = O(m^{p+l})$ ensures same recovery error.} \\
%\hline
\end{tabular}
\label{table:general-summary}
\end{table}

\section{Experiments}
%\yang{1. test how data evenness affect results using higher-order sketching2. tensor product estimation3.tensor contraction estimation4. tensorized neural network}

%In this section, we present an empirical study of the HCS in tensor operations. 

The goals of this section are: evaluate HCS for data compression; demonstrate the advantages of HCS in  various tensor operation estimations, compared to CS; present potential application of HCS in deep learning tasks. All synthetic data experiments are run on a MAC with Intel Core i5 processor. Section~\ref{sec:trl-exp} is run on a NVIDIA DGX station with Tesla V100 Tensor Core GPU.

\subsection{HCS for unevenly-distributed data}
In Section~\ref{sec:HCS-def}, we point out that unevenly-distributed data value may affect the performance of  HCS. We generate a matrix $A \in \R^{50 \times 50}$, where every entry is sampled from a uniform distribution between $-1$ and $1$, except the elements in the second column, which are filled with value $100$. We compress this data using HCS and CS. The compression ratio is calculated as $2500/m^2$ where $m$ is the sketching dimension along each two mode for HCS. CS sketching dimension $c = m^2$. We rearrange the data so that values in the matrix are evenly distributed, and we run the HCS and CS again. We compare the relative error($\frac{\norm{\hat{A}-A}_F}{\norm{A}_F}$) in Figure~\ref{fig:even-exp}. HCS performs poorly before rearranging the data. But after the rearrangment, HCS performs as good as CS, which corresponds to our analysis.

\begin{figure}[ht] 
  \begin{subfigure}[b]{0.33\linewidth}
    \centering
    \includegraphics[height = 3cm]{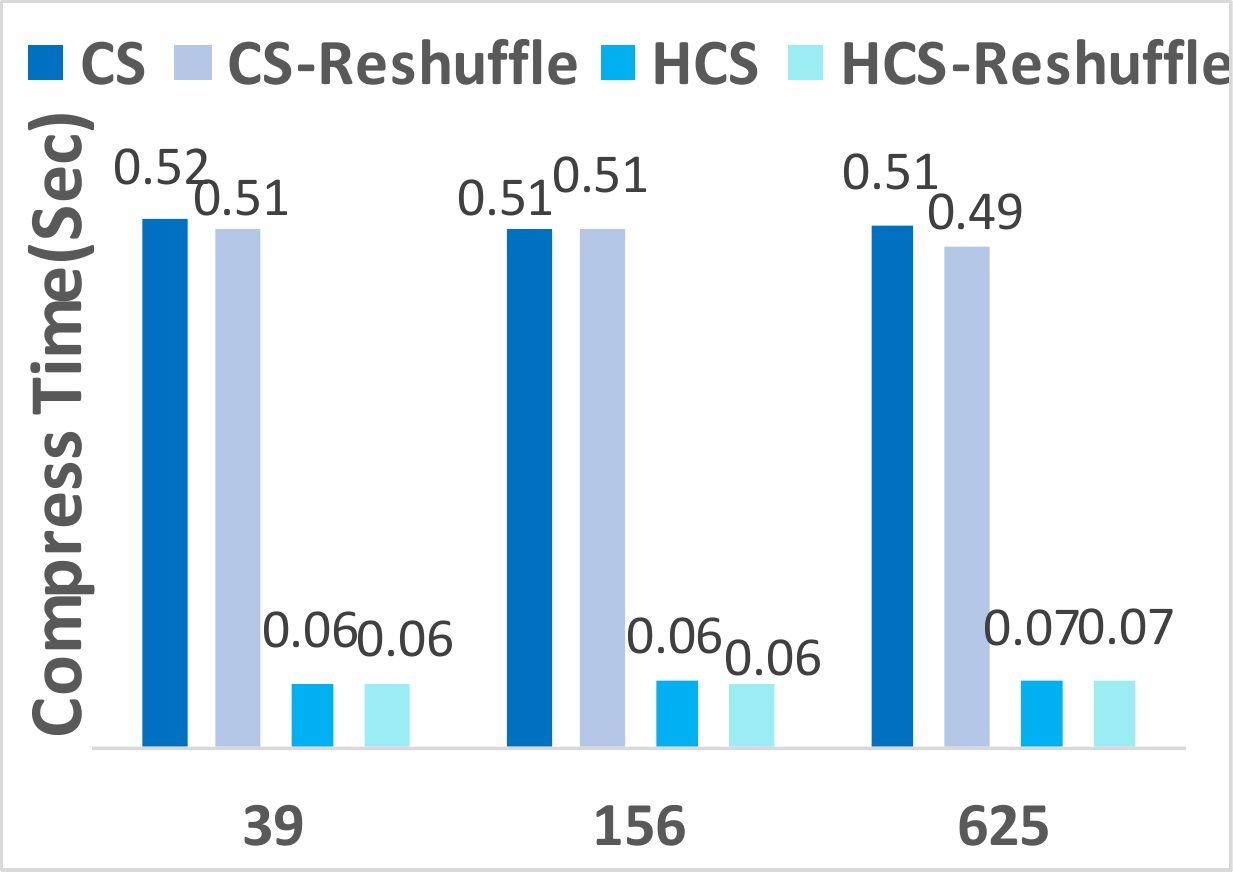}
\label{fig:even-time}
  \end{subfigure}%% 
  \hfill
  \begin{subfigure}[b]{0.33\linewidth}
    \centering
        \includegraphics[height = 3cm]{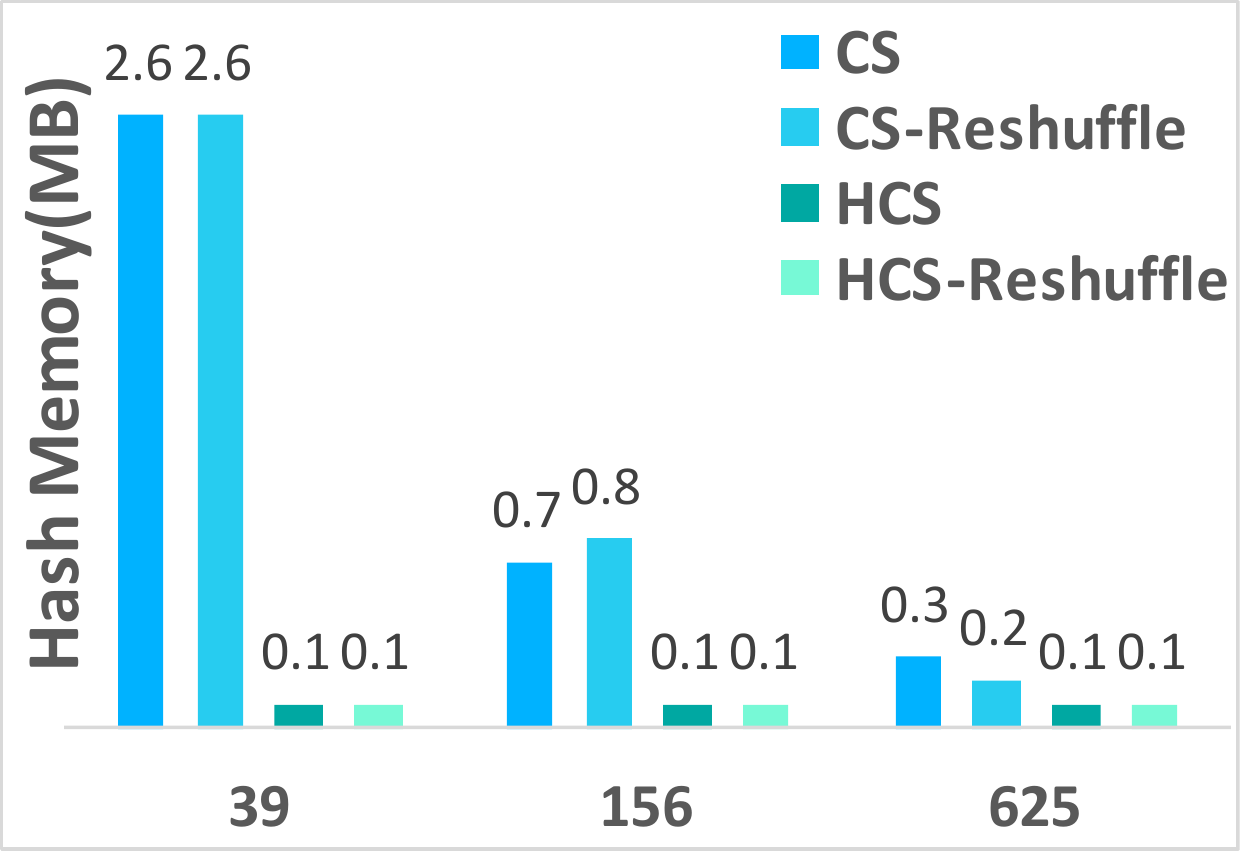}
\label{fig:even-memory}
  \end{subfigure} 
  \hfill
  \begin{subfigure}[b]{0.33\linewidth}
    \centering
\includegraphics[height = 3cm]{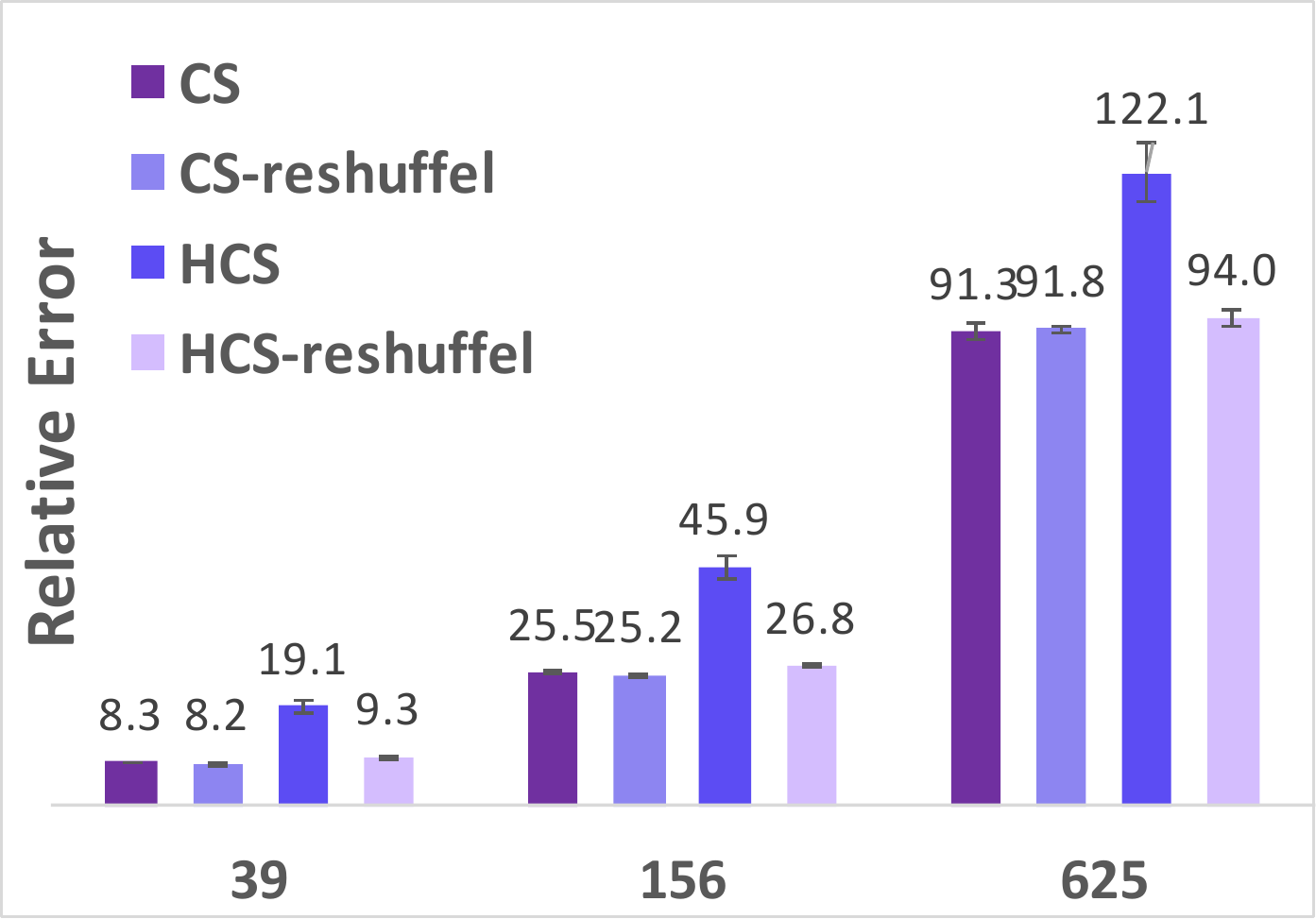}
\label{fig:even-error}
  \end{subfigure}
  \caption{Running time, memory and error comparison for unevenly-distributed data (x-axis shows the compression ratio).}
  \label{fig:even-exp}
\end{figure}

\subsection{Tensor operations}
\textbf{Kronecker product:}
We compress Kronecker products using HCS and CS.
We compute $A\otimes B$, where $A,B \in \R^{30 \times 30}$. All inputs are randomly generated from the uniform distribution[-5,5].
%We compute the compression time, recovery relative error and the memory usage verse the compression ratio. 
%Given a Kronecker product result matrix $C \in \R^{ab \times de}$, the relative error is calculated as $\frac{\norm{C-\hat{C}}_F}{\norm{C}_F}$. We choose $a= b=d=e=30$. 
The result is obtained by independently running the sketching $20$ times and choosing the median. 
Keeping the same compression ratio, HCS has slightly higher recovery error than CS.
But HCS is systematically better in computation speed and memory usage compared to CS in Figure~\ref{fig:kron-exp}. 

\begin{figure}[ht] 
  \begin{subfigure}[b]{0.33\linewidth}
    \centering
    \includegraphics[width=4.5cm]{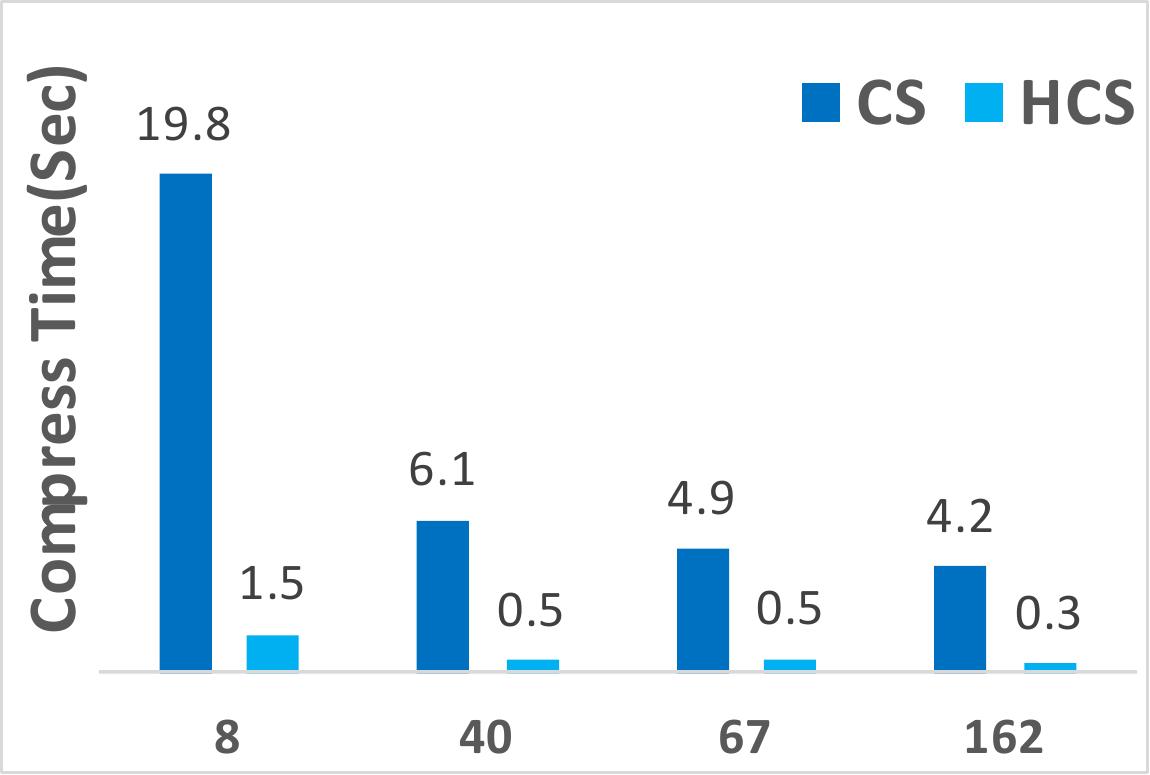}
\label{fig:kron-time}
  \end{subfigure}%% 
  \hfill
  \begin{subfigure}[b]{0.33\linewidth}
    \centering
        \includegraphics[width=4.5cm]{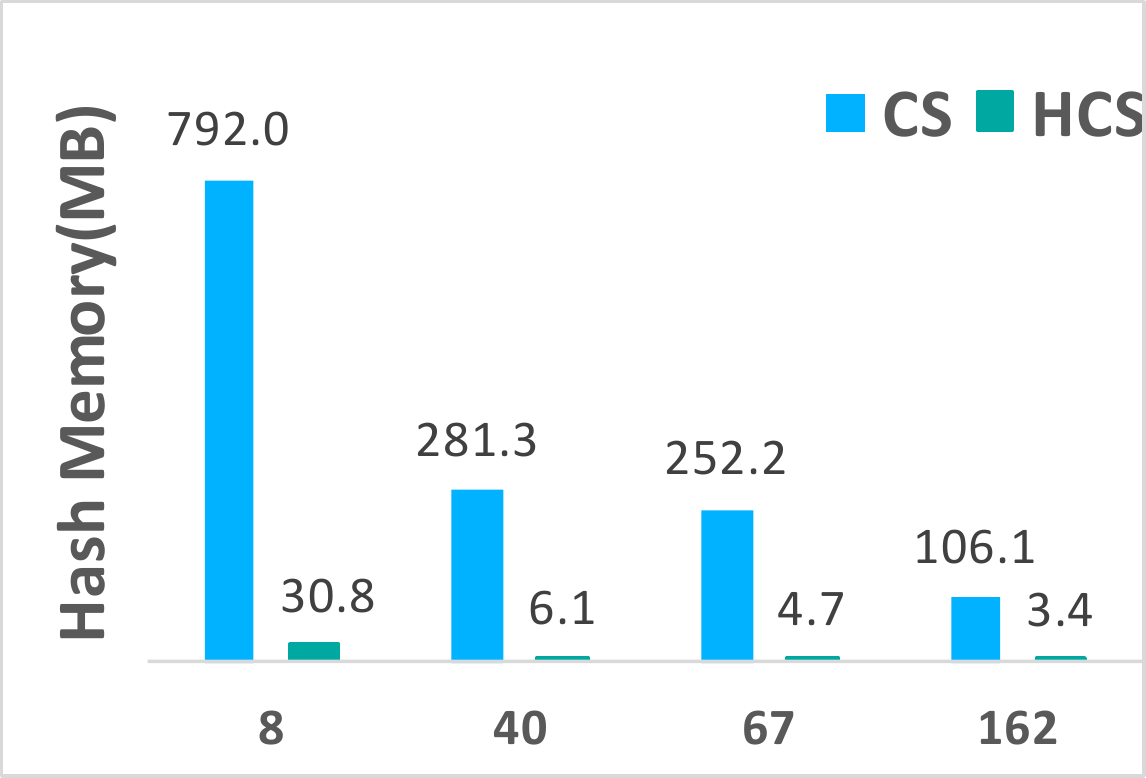}
\label{fig:kron-memory}
  \end{subfigure} 
  \hfill
  \begin{subfigure}[b]{0.33\linewidth}
    \centering
\includegraphics[width=4.5cm]{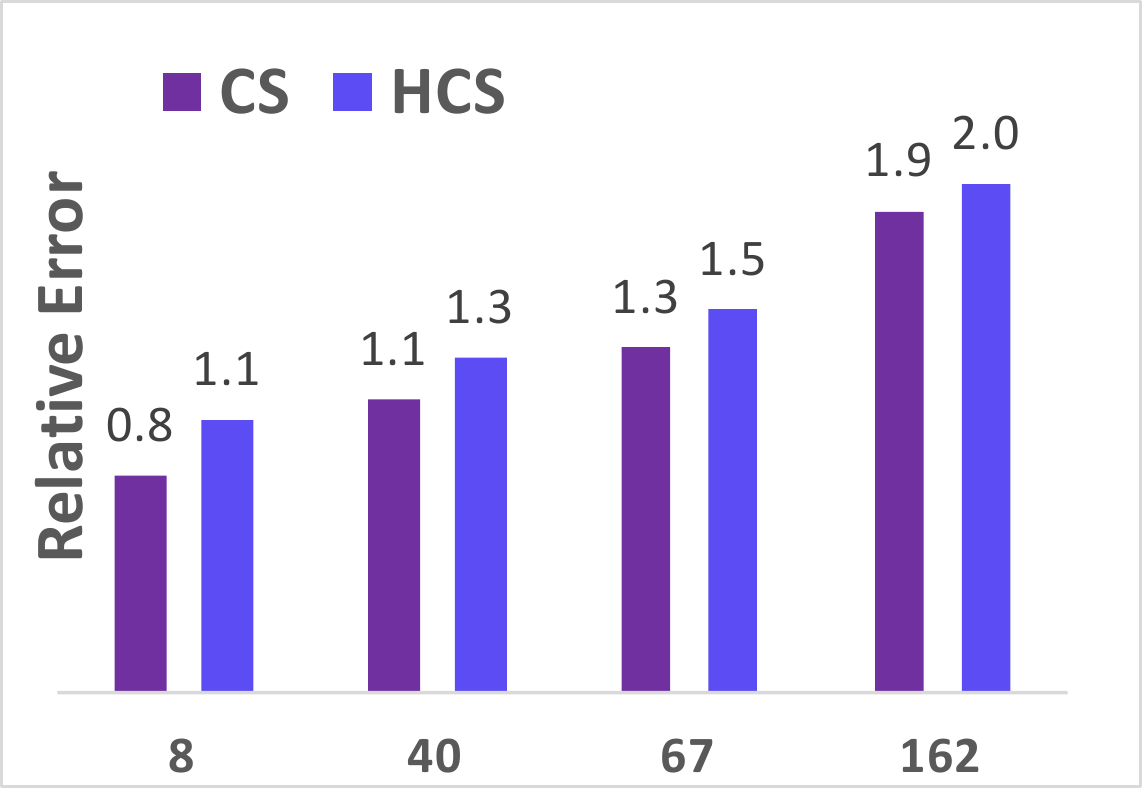}
\label{fig:kron-error}
  \end{subfigure}
  \caption{Running time, memory and error comparison for Kronecker product.}
  \label{fig:kron-exp}
\end{figure}

\textbf{Tensor contraction:}
\setlength{\columnsep}{0pt}%
Given $\mathcal{A}\in \R^{30 \times 30 \times 40}$, $\mathcal{B} \in \R^{40 \times 30 \times 30}$,
we compute $\mathcal{A}\mathcal{B}\in \R^{30 \times 30 \times 30 \times 30}$: the third mode of $\mathcal{A}$ contract with the first mode of $\mathcal{B}$. 
We compress and decompress the contraction as demonstrated in Section~\ref{sec:tce}.
All entries are sampled independently and uniformly from [0,10]. 
We repeat the sketching 20 times and use the median as the final estimation. Overall, HCS outperforms CS in time, memory and recovery error aspects as shown in Figure~\ref{fig:contraction-exp}. When the compression ratio is $8$, HCS is $200$x faster than CS and uses $40$x less memory while keeping almost the same recovery error. HCS is more efficient in real computation because it  performs compact contraction in matrix/tensor format, while CS requires computation on each slide of the input tensor.

\begin{figure}[ht] 
  \begin{subfigure}[b]{0.33\linewidth}
    \centering
    \includegraphics[width=4.5cm]{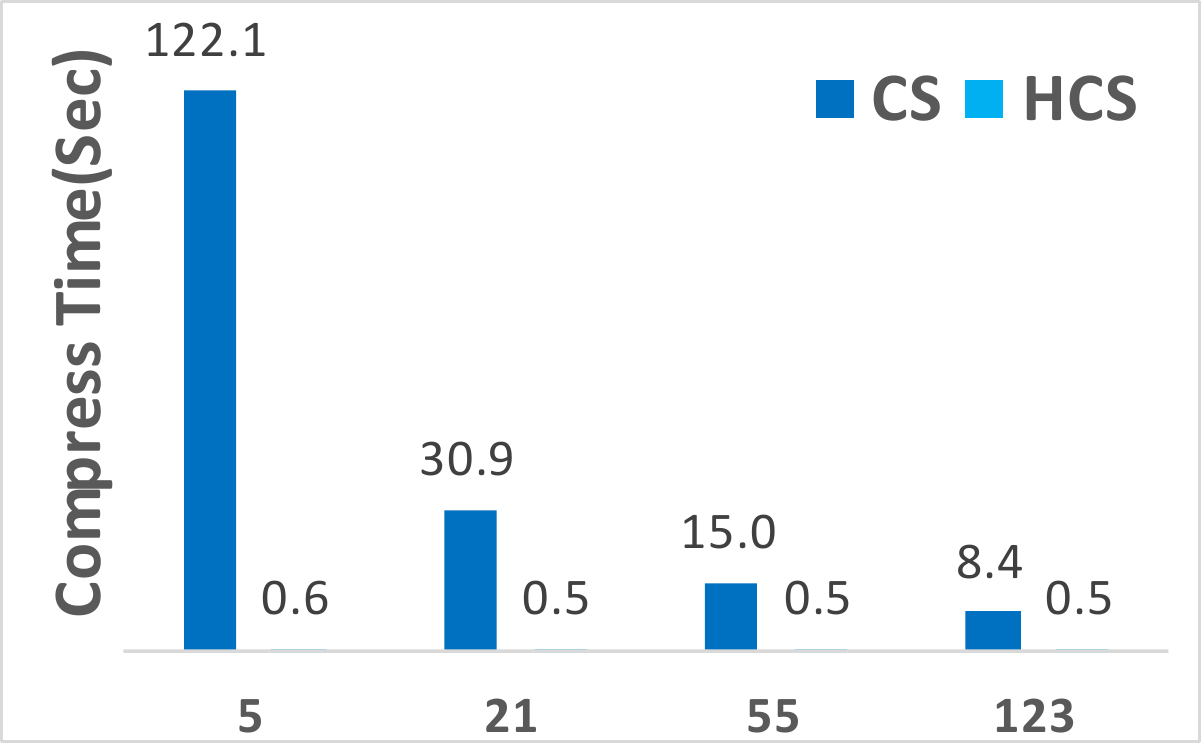}
\label{fig:con-time}
  \end{subfigure}%% 
  \hfill
  \begin{subfigure}[b]{0.33\linewidth}
    \centering
        \includegraphics[width=4.5cm]{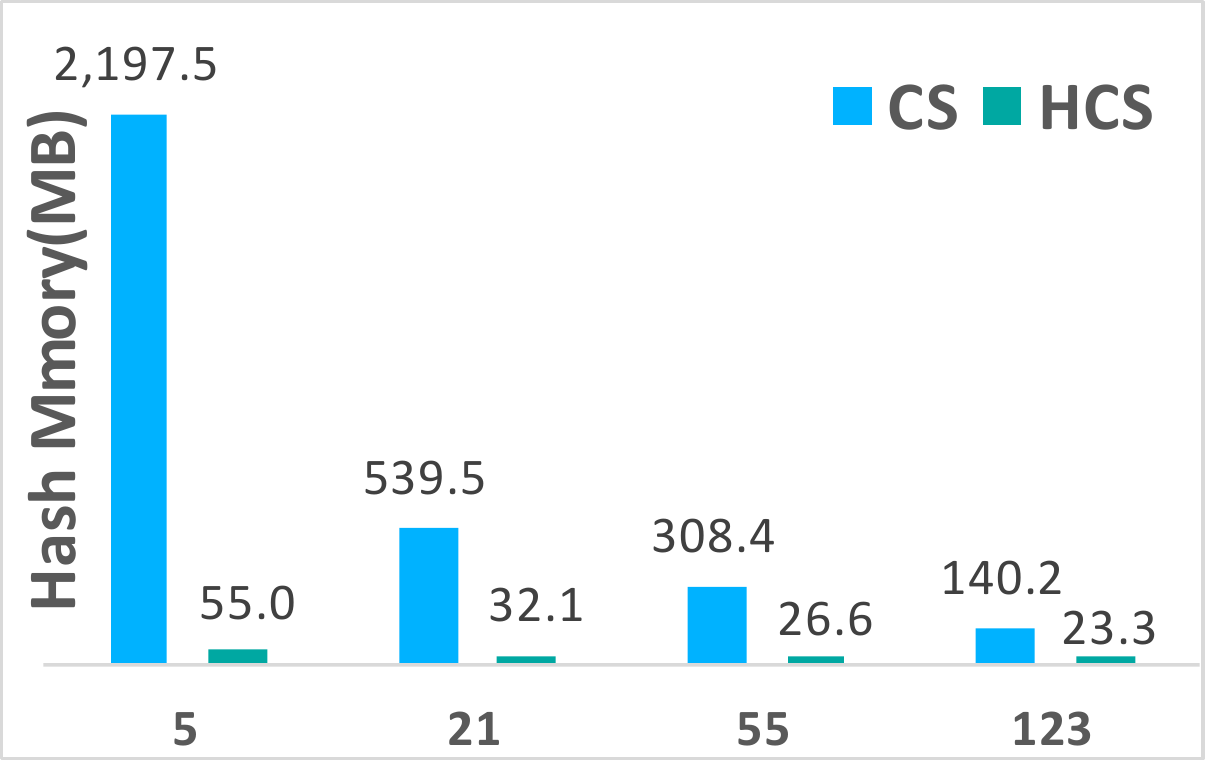}
\label{fig:con-memory}
  \end{subfigure} %%
  \hfill
  \begin{subfigure}[b]{0.33\linewidth}
    \centering
\includegraphics[width=4.5cm]{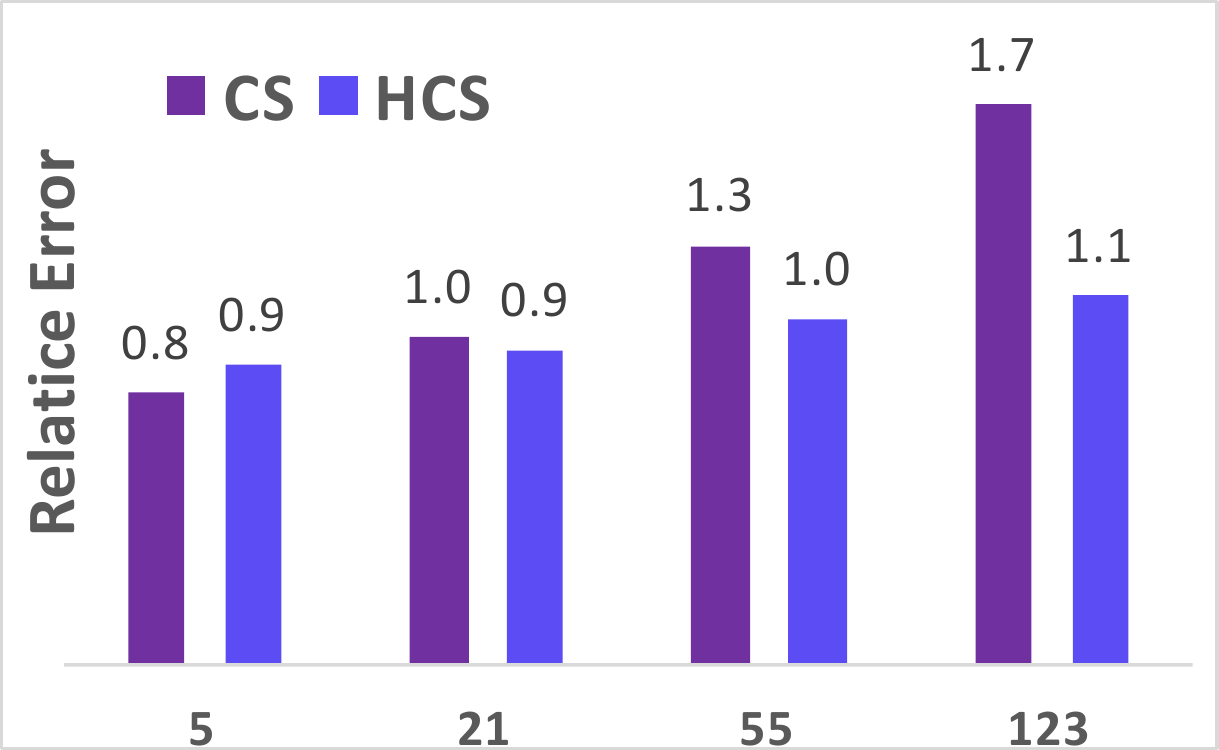}
\label{fig:con-error}
  \end{subfigure}
  \caption{Running time, memory and error comparison for tensor contraction.}
  \label{fig:contraction-exp}
\end{figure}

\begin{figure}[ht]
\begin{minipage}{0.55\textwidth}
\centering
    \includegraphics[height = 2.5cm]{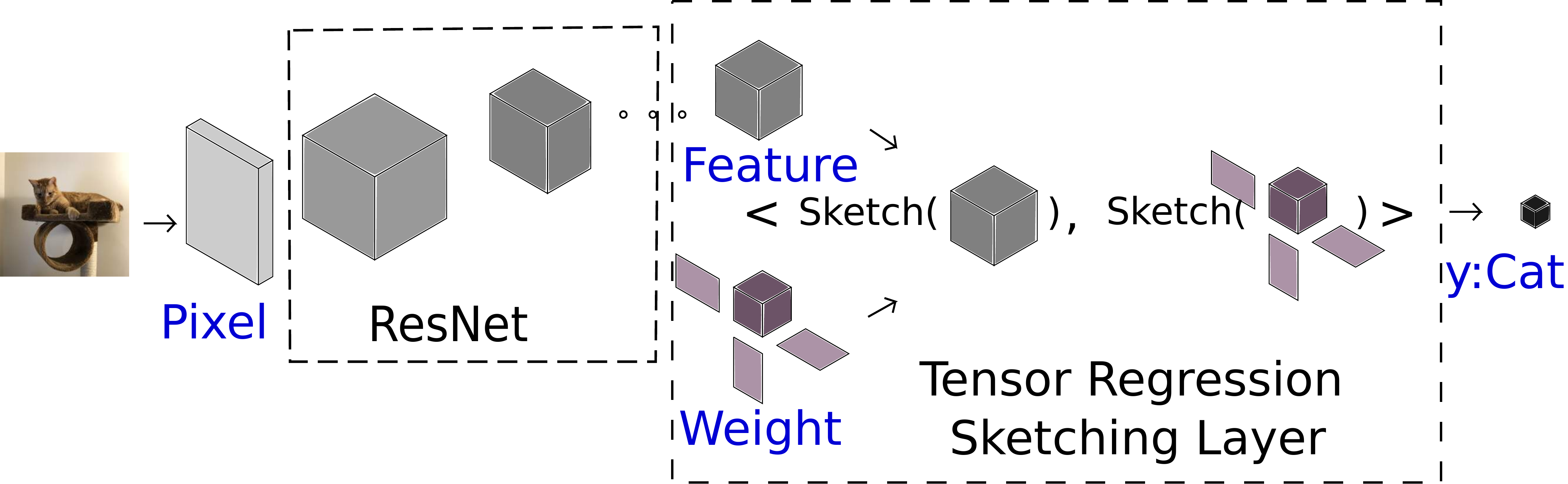}
    \caption{Tensor regression layer with sketching.}
    \label{fig:trl}
\end{minipage}
\hfill
\begin{minipage}{0.45\textwidth}
  \centering
    \includegraphics[height = 3.5cm]{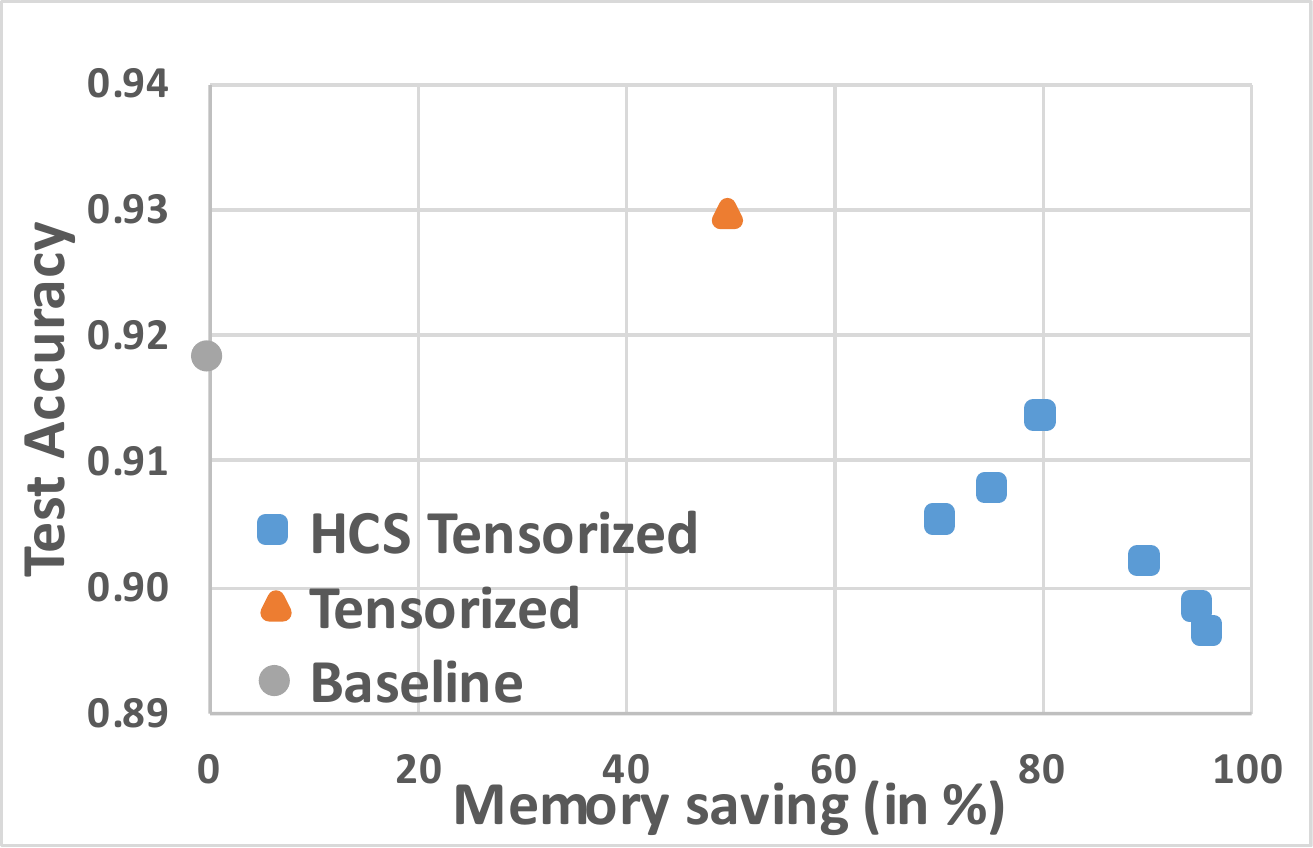}
    \captionof{figure}{Test accuracy on CIFAR 10.}
    \label{fig:cifar_compression}
\end{minipage}
\end{figure}

\subsection{Tensor regression network}
\label{sec:trl-exp}
To demonstrate the versatility of our method, we combine it by integrating it into a tensor regression network for object classification. 
Tensor regression layer (TRL)~\citep{jean} is proposed to 
learn a Tucker-form tensor weight for the high-order activation tensor. 
We sketch the Tucker tensor weight using Equation~\ref{eqa:ms}.
%Instead of reconstructing the full regression weight tensor using tensor contractions, we propose to sketch it using Equation~\ref{eqa:ms}.
We use a ResNet18, from which the fully-connected layers are removed, and are replaced by our proposed sketched tensor regression layer. The network structure is illustrated in Figure~\ref{fig:trl}. The space saving is calculated as $1-\frac{P_T}{P_B}$ where $P_T$ and $P_B$ are the number of parameters in the last layer in the tensorized network and the baseline.
%We compare CTS and MTS on the CIFAR10 dataset~\citep{cifar10}. 
%The dataset consists of 60000 $32\times 32$ color images in 10 classes. 
%The dataset is split into 50000 training images and 10000 test images. 
%We report the learning loss and the test accuracy in Figure~\ref{fig:cifar}. Compared to CTS, MTS converges faster. 
In Figure~\ref{fig:cifar_compression},  tensorized network outperforms the baseline(original Resnet18) while using $50\%$ less memory in the last layer. With HCS, we can further reduce $30\%$ more memory requirement  while keeping the prediction as good as the baseline.

 \section{Conclusion and future work}
In this paper, we extend count sketch to a new sketching technique, called higer-order count sketching (HCS). 
HCS gains an exponential saving (with respect to the order of the tensor) in the memory requirements of the hash functions and allows efficient approximation of various tensor operations such as tensor products and tensor contractions. 
%We apply the algorithms to synthetic and real data. In both cases, HCS performs efficient compression while preserving major information from the input.
Some interesting future works are how to choose the optimal tensor order for input (vector) data when we have limited information about the data and how to extend other hash algorithms such as simhash~\citep{Charikar:2002:SET:509907.509965}, minhash~\citep{one-bit} and cuckoo hashing~\citep{cuckoo} to tensors. We are also interested in analyzing the performance differences on sparse and dense tensors using various sketching techniques. Providing HCS  implementations within computation platforms such as MKL and CUDA is also part of the future work.
\section*{Acknowledgement}
This paper is supported by AFOSR Grant FA9550-15-1-0221.

\newpage
\bibliography{main}

\begin{thebibliography}{}

\bibitem[Alon et~al., 1999]{sketch}
Alon, N., Matias, Y., and Szegedy, M. (1999).
\newblock The space complexity of approximating the frequency moments.
\newblock {\em Journal of Computer and System Sciences}, 58(1):137--147.

\bibitem[Amossen and Pagh, 2011]{Amossen}
Amossen, R.~R. and Pagh, R. (2011).
\newblock A new data layout for set intersection on gpus.
\newblock In {\em Proceedings of the 2011 IEEE International Parallel \&
  Distributed Processing Symposium}, IPDPS '11, pages 698--708, Washington, DC,
  USA. IEEE Computer Society.

\bibitem[Anandkumar et~al., 2014]{animajmlr}
Anandkumar, A., Ge, R., Hsu, D., Kakade, S.~M., and Telgarsky, M. (2014).
\newblock Tensor decompositions for learning latent variable models.
\newblock {\em The Journal of Machine Learning Research}, 15(1):2773--2832.

\bibitem[Bringmann and Panagiotou, 2017]{Bringmann2017}
Bringmann, K. and Panagiotou, K. (2017).
\newblock Efficient sampling methods for discrete distributions.
\newblock {\em Algorithmica}, 79(2):484--508.

\bibitem[Broder, 1997]{one-bit}
Broder, A.~Z. (1997).
\newblock On the resemblance and containment of documents.
\newblock {\em IEEE:Compression and Complexity of Sequences: Proceedings,
  Positano, Amalfitan Coast, Salerno, Italy,}, 10:21--29.

\bibitem[Caiafa and Cichocki, 2010]{cur-tensor}
Caiafa, C.~F. and Cichocki, A. (2010).
\newblock Generalizing the column–row matrix decomposition to multi-way
  arrays.
\newblock {\em Linear Algebra and its Applications}, 433:557--573.

\bibitem[Charikar et~al., 2002]{cs}
Charikar, M., Chen, K., and Farach-Colton, M. (2002).
\newblock Finding frequent items in data streams.
\newblock In {\em Proceedings of ICALP’02}, pages 693--703.

\bibitem[Charikar, 2002]{Charikar:2002:SET:509907.509965}
Charikar, M.~S. (2002).
\newblock Similarity estimation techniques from rounding algorithms.
\newblock In {\em Proceedings of the Thiry-fourth Annual ACM Symposium on
  Theory of Computing}, STOC '02, pages 380--388, New York, NY, USA. ACM.

\bibitem[Christiani et~al., 2018]{cphash}
Christiani, T., Pagh, R., and Sivertsen, J. (2018).
\newblock Scalable and robust set similarity join.
\newblock {\em The annual IEEE International Conference on Data Engineering}.

\bibitem[Cormode and Muthukrishnan, 2005]{Cormode:2005}
Cormode, G. and Muthukrishnan, S. (2005).
\newblock What's hot and what's not: Tracking most frequent items dynamically.
\newblock {\em ACM Trans. Database Syst.}, 30(1):249--278.

\bibitem[Demaine et~al., 2002]{internet}
Demaine, E.~D., L\'{o}pez-Ortiz, A., and Munro, J.~I. (2002).
\newblock Frequency estimation of internet packet streams with limited space.
\newblock In {\em Proceedings of the 10th Annual European Symposium on
  Algorithms}, ESA '02, pages 348--360, London, UK, UK. Springer-Verlag.

\bibitem[Eckart and Young, 1936]{svd}
Eckart, C. and Young, G. (1936).
\newblock The approximation of one matrix by another of lower rank.
\newblock In {\em Psychometrika}. Springer-Verlag.

\bibitem[Fukui et~al., 2016]{MCB}
Fukui, A., Park, D.~H., Yang, D., Rohrbach, A., Darrell, T., and Rohrbach, M.
  (2016).
\newblock Multimodal compact bilinear pooling for visual question answering and
  visual grounding.
\newblock {\em EMNLP 2016}.

\bibitem[Gao et~al., 2016]{CBP}
Gao, Y., Beijbom, O., Zhang, N., and Darrell, T. (2016).
\newblock Compact bilinear pooling.
\newblock {\em Computer Vision and Pattern Recognition (CVPR), 2016}.

\bibitem[Harshman, 1970]{Harshman:1970:UCLA}
Harshman, R. (1970).
\newblock Foundations of the parafac procedure: Models and conditions for an
  explanatory multi-model factor analysis.
\newblock {\em UCLA Working Papers in Phonetics}, 16:1--84.

\bibitem[Jhurani and Mullowney, 2015]{gemm}
Jhurani, C. and Mullowney, P. (2015).
\newblock A gemm interface and implementation on nvidia gpus for multiple small
  matrices.
\newblock {\em J. of Parallel and Distributed Computing}, pages 133--140.

\bibitem[Kolda and Sun, 2008]{kolda-datamining}
Kolda, T.~G. and Sun, J. (2008).
\newblock Scalable tensor decompositions for multi-aspect data mining.
\newblock {\em ICDM}.

\bibitem[Kossaifi et~al., 2017]{jean}
Kossaifi, J., Lipton, Z.~C., Khanna, A., Furlanello, T., and Anandkumar, A.
  (2017).
\newblock Tensor regression networks.

\bibitem[Ma et~al., 2019]{sparsetensormatrix}
Ma, Y., Li, J., Wu, X., Yan, C., Sun, J., and Vuduc, R. (2019).
\newblock Optimizing sparse tensor times matrix on gpus.
\newblock {\em Journal of Parallel and Distributed Computingg}, pages 99--109.

\bibitem[Mahoney and Drineas, 2009]{cur}
Mahoney, M.~W. and Drineas, P. (2009).
\newblock Cur matrix decompositions for improved data analysis.
\newblock {\em Proceedings of the National Academy of Sciences},
  106(3):697--702.

\bibitem[Malik and Becker, 2018]{tucker-ts}
Malik, O.~A. and Becker, S. (2018).
\newblock Low-rank tucker decomposition of large tensors using tensorsketch.
\newblock {\em Neural Information Processing Systems}.

\bibitem[Pagh, 2012]{cmm}
Pagh, R. (2012).
\newblock Compressed matrix multiplication.
\newblock {\em ITCS}.

\bibitem[Pagh and Rodler, 2001]{cuckoo}
Pagh, R. and Rodler, F.~F. (2001).
\newblock Cuckoo hashing.
\newblock {\em Lecture Notes in Computer Science}.

\bibitem[Papalexakis and Pelechrinis, 2018]{basketball}
Papalexakis, E. and Pelechrinis, K. (2018).
\newblock thoops: A multi-aspect analytical framework for spatio-temporal
  basketball data.
\newblock {\em ACM CIKM}.

\bibitem[Pham and Pagh, 2013]{ts-k}
Pham, N. and Pagh, R. (2013).
\newblock Fast and scalable polynomial kernels via explicit feature maps.
\newblock {\em KDD}.

\bibitem[Praneeth~Netrapalli, 2014]{rpca}
Praneeth~Netrapalli, U N~Niranjan, S. S. A. A. P.~J. (2014).
\newblock Non-convex robust pca.
\newblock {\em Conference on Neural Information Processing Systems}.

\bibitem[Shi et~al., 2016]{blas}
Shi, Y., Niranjan, U., Anandkumar, A., and Cecka, C. (2016).
\newblock Tensor contractions with extended blas kernels on cpu and gpu.
\newblock {\em HiPC}.

\bibitem[Springer et~al., 2017]{hptt}
Springer, P., Su, T., and Bientinesi, P. (2017).
\newblock Hptt: A high-performance tensor transposition c++ library.
\newblock {\em Proceedings of the 4th ACM SIGPLAN International Workshop on
  Libraries, Languages, and Compilers for Array Programming}.

\bibitem[Tenenbaum and Freeman, 2000]{bilinear}
Tenenbaum, J.~B. and Freeman, W.~T. (2000).
\newblock Separating style and content with bilinear models.
\newblock {\em Neural computation}, 12(6):1247--1283.

\bibitem[Tucker, 1966]{Tucker:1966:Psy}
Tucker, L.~R. (1966).
\newblock Some mathematical notes on three-mode factor analysis.
\newblock {\em Psychometrika}, 31(3):279--311.

\bibitem[Wang et~al., 2015]{ts}
Wang, Y., Tung, H.-Y., Smola, A., and Anandkumar, A. (2015).
\newblock Fast and guaranteed tensor decomposition via sketching.
\newblock {\em Proceedings of Advances in Neural Information Processing Systems
  (NIPS)}.

\bibitem[Weinberger and Saul, 2009]{featurehash}
Weinberger, K.~Q. and Saul, L.~K. (2009).
\newblock Distance metric learning for large margin nearest neighbor
  classification.
\newblock {\em Journal of Machine Learning Research}, 10:207--244.

\bibitem[Yang et~al., 2018]{paraS}
Yang, B., Zamzam, A., and Sidiropoulos, N.~D. (2018).
\newblock Parasketch: Parallel tensor factorization via sketching.
\newblock {\em SIAM International Conference on Data Mining}.

\bibitem[Yu et~al., 2017]{rose-rnn}
Yu, R., Zheng, S., Anandkumar, A., and Yue, Y. (2017).
\newblock Long-term forecasting using tensor-train rnns.

\end{thebibliography}
\bibliographystyle{apalike}
\newpage

\begin{appendices}
\section{List of some algorithms mentioned in the main paper}
\label{app:alg}
\subsection{Count sketch}
\begin{algorithm}[h]
\caption{Count Sketch~\cite{cs}}
\label{alg:cs-single}
\begin{algorithmic}[1]
\Procedure{CS}{$x,c$}
\Comment{$x\in \R^{n}$}
\State \texttt{$s \in Maps([n]\Rightarrow \{-1,+1\})$}
\State\texttt{$h \in Maps([n]\Rightarrow \{0,\cdots c\})$}
\For{\texttt{i:1 to n} }
\State
\texttt{$y[h[i]] += s[i]x[i]$}
\EndFor
\State \textbf{return} $y$
\EndProcedure
\Procedure{CS-Decompress}{$y$}
\For{\texttt{i:1 to n} }
\State
\texttt{$\hat{x}[i] = s[i]y[h[i]]$}
\EndFor
\State \textbf{return} $\hat{x}$
\EndProcedure
\end{algorithmic}
\end{algorithm}
\subsection{Higher-order count sketch}
\begin{algorithm}[h]
\caption{Higher-order Count Sketch}
\label{alg:matrix-ms}
\begin{algorithmic}[1]
\Procedure{HCS}{$\T,M_{list}$}
\Comment{$\T\in \R^{n_1\times \cdots \times  n_N}$}
\\
\Comment{$M_{list} $ contains sketching parameters: $m_1 \dots m_N$}
\State
Generate hash functions $s_1, \cdots s_N$, $h_1, \cdots h_N$ given $M_{list}$
\State
Compute hash matrices $\mathcal{S}$, $H_1, \cdots H_N$
%\State \texttt{$s_1 \in Maps([n_1]\Rightarrow \{-1,+1\})$}, \texttt{$s_2 \in Maps([ n_2]\Rightarrow \{-1,+1\})$} 
%\State\texttt{$h_1 \in Maps([n_1]\Rightarrow \{0,\cdots m_1\})$}, \texttt{$h_2 \in Maps([n_2]\Rightarrow \{0,\cdots m_2\})$}
%\State\texttt{$S = s_1 \otimes s_2$}
%\State\texttt{$H_1 =$REFORM$(h_1)$, $H_2 =$REFORM$(h_2)$}

%\State
%\texttt{$\mathcal{P} = (\mathcal{S} \circ \mathcal{A})(H_1, \cdots, H_N)$}\Comment{Sketch result}

%\For{\texttt{i:=1 to $n_1$ }}
%\For{\texttt{j:=1 to $n_2$ }}
%\State
%\texttt{$pa$[$h_1[i]$,$h_2[j]$]:= $pa$[$h_1[i]$,$h_2[j]$] + $s_1[i]s_2[j]A_{ij}$}
%\EndFor
%\EndFor
%\State\texttt{$S$ = \{$s_{1},s_{2}$\}}
%\State\texttt{$H$ = \{$H_{1},H_{2}$\}}
\State \textbf{return} $ (\mathcal{S} \circ \T)(H_1, \cdots, H_N)$
\EndProcedure
\Procedure{HCS-decompress}{$\HCS(\T)$}
\State \textbf{return} $\mathcal{S} \circ \HCS(\T)(H_1^{T}, \cdots, H_N^{T})$
\EndProcedure
\end{algorithmic}
\end{algorithm}

\subsection{Approximate Kronecker product}
\label{alg:kron}
\scalebox{0.90}{
\begin{minipage}{\linewidth}
\begin{algorithm}[H]
\caption{Compress/Decompress Kronecker Product}\label{alg:krocnecker}
\begin{algorithmic}[1]
\Procedure{Compress-KP}{$A,B,m_1,m_2$}\Comment{$A\in \R^{n_1\times  n_2}, B \in \R^{n_3 \times n_4}$}
\For{X in [A,B]}
\State
\texttt{$X^{HCS}$ = $\HCS$($X,[m_1,m_2]$)}
\EndFor

\State
\texttt{$\FFT2$($A^{HCS}$),$\FFT2$($B^{HCS}$)}
\State
\texttt{$P$=$\IFFT2$($A^{HCS}$ $\circ$ $B^{HCS}$)}
\State
\textbf{return} ($P$)
\EndProcedure
\\
\Procedure{Decompress-KP}{$P$}
\State
\texttt{$C$ = zeros($n_1n_3,n_2n_4$)}
\For{\texttt{w,q,o,g:=1 to $n_1,n_2,n_3,n_4$ }}
\State
\texttt{k = $(h_{A1}[w]+h_{B1}[o])\text{ mod } m_1$}
\State
\texttt{l = $(h_{A2}[q]+h_{B2}[g])\text{ mod } m_2$}
\State
\texttt{tmp = $s_{A1}[w]s_{A2}[q]s_{B1}[o]s_{B2}[g]P[k,l]$}
\State
\texttt{i = $n_3(w-1)+o$}
\State
\texttt{j = $n_4(q-1)+g$}
\State
\texttt{$C_{ij}$ = tmp}
\EndFor
\State \textbf{return} ($C$)
\EndProcedure
\end{algorithmic}
\end{algorithm}
\end{minipage}
}

\subsection{Approximate Matrix product}
\label{alg:mm}
\scalebox{0.90}{
\begin{minipage}{\linewidth}
\begin{algorithm}[H]
\caption{Compress/Decompress Matrix Product}\label{alg:matrixcontraction}
\begin{algorithmic}[1]
\Procedure{Compress-MP}{$A,B,m_1,m_2,m_3$}\Comment{$A\in \R^{n_1\times  k}, B \in \R^{k \times n_2}$}
\State
\texttt{$A^{HCS}$ = $\HCS$($A,[m_1,m_2]$)}\Comment{Choose hash matrix along k mode be identity matrix }
\State
\texttt{$B^{HCS}$ = $\HCS$($B,[m_2,m_3]$)}
\State
\texttt{$P$=$A^{HCS}$ $B^{HCS}$}
\State
\textbf{return} ($P$)
\EndProcedure
\\
\Procedure{Decompress-MP}{$P$}
\State
\texttt{$C$ = zeros($n_1,n_2$)}
\For{\texttt{i,j:=1 to $n_1,n_2$ }}
\State
\texttt{k = $h_{A1}[i]$}
\State
\texttt{l = $h_{B2}[j]$}
\State
\texttt{$C_{ij}$ = $s_{A1}[i]s_{B2}[j]P[k,l]$}
\EndFor
\State \textbf{return} ($C$)
\EndProcedure
\end{algorithmic}
\end{algorithm}
\end{minipage}
}

\section{Proofs of some technical theorems/lemmas}
\label{app:1}

\subsection{Analysis of CS and HCS approximation error}
\label{app:csms}
\begin{theorem}[\cite{cs}]
\label{thm:1}
Given a vector $u \in \R^{n}$, \textbf{CS} hashing functions s and h with sketching dimension \csdim, for any $i^*$, the recovery function $\hat{u}_{i^*} = s(i^*)CS(u)(h(i^*))$ computes an unbiased estimator for $u_{i^*}$ with variance bounded by $||u||^2_2/\csdim$.
\end{theorem}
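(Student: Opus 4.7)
The plan is to split the estimator $\hat{u}_{i^*} = s(i^*)\,\mathrm{CS}(u)_{h(i^*)}$ into the ``signal'' coming from index $i^*$ and the ``noise'' coming from colliding indices, then analyze each part using the 2-wise independence of $s$ and $h$ separately. Expanding the definition,
\begin{equation*}
\hat{u}_{i^*} \;=\; s(i^*)\sum_{i:\,h(i)=h(i^*)} s(i)\,u_i
\;=\; u_{i^*} \;+\; \sum_{i\neq i^*} s(i^*)s(i)\,u_i\,\mathbf{1}[h(i)=h(i^*)],
\end{equation*}
using $s(i^*)^2 = 1$ to handle the $i = i^*$ term.

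For unbiasedness, I would take expectations term by term. Since $s$ and $h$ are drawn independently, and since $s$ is 2-wise independent with $\mathbb{E}[s(i)] = 0$, one has $\mathbb{E}[s(i^*)s(i)] = 0$ whenever $i \neq i^*$. Hence every summand in the noise part vanishes in expectation and $\mathbb{E}[\hat{u}_{i^*}] = u_{i^*}$.

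For the variance, I would square the noise part and take expectations:
\begin{equation*}
\mathrm{Var}(\hat{u}_{i^*}) \;=\; \mathbb{E}\!\left[\Big(\sum_{i\neq i^*} s(i^*)s(i)\,u_i\,\mathbf{1}[h(i)=h(i^*)]\Big)^{\!2}\right].
\end{equation*}
Expanding and again using that $s(i)s(j)$ has mean zero for $i\neq j$ under the 2-wise independence of $s$ (together with independence from $h$), all cross-terms $i\neq j$ die, leaving only the diagonal contributions
\begin{equation*}
\sum_{i\neq i^*} u_i^2 \,\Pr[h(i)=h(i^*)].
\end{equation*}
Finally, by 2-wise independence of $h$ mapping into $[c]$, the collision probability is at most $1/c$, giving the bound $\|u\|_2^2/c$.

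The only delicate step is the cross-term cancellation, because it requires that the product $s(i^*)^2 s(i)s(j)\mathbf{1}[h(i)=h(i^*)]\mathbf{1}[h(j)=h(i^*)]$ has zero mean for $i\neq j$ with $i,j\neq i^*$; I would justify this by conditioning on $h$ (so the indicator factors become constants) and then invoking $\mathbb{E}[s(i)s(j)]=0$ from 2-wise independence of $s$. Beyond that, the argument is a direct calculation, so I do not expect any real obstacle.
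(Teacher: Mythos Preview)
Your proposal is correct and follows essentially the same argument as the paper: the paper likewise introduces the indicator $K_i = \mathbf{1}[h(i)=h(i^*)]$, separates the $i^*$ term from the noise, uses $\mathbb{E}[s(i^*)s(i)]=0$ for unbiasedness, and bounds the variance via $\mathbb{E}[K_i^2]=\mathbb{E}[K_i]=1/c$. Your explicit conditioning-on-$h$ justification for the vanishing of cross terms is, if anything, slightly more careful than what the paper spells out.
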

\begin{proof}[\textbf{Proof of Theorem~\ref{thm:1}}]
For $i \in \{ 1,2,\cdots n \},$ let $K_i$ be the indicator variable for the event $h(i) = h(i^*)$. We can write $\hat{u}_{i^*}$ as
\begin{align}
\hat{u}_{i^*} = s(i^*)\sum_i K_i s(i) u_i
\end{align}
Observe that $K_i = 1$, if $i = i^*$, $\E(s(i^*)s(i)) = 0$, for all $i \neq i^*$, and $\E(s(i^*)^2) = 1$, we have
\begin{equation}
\begin{split}
\E(\hat{u}_{i^*}) &= \E(s(i^*)K_{i^*} s(i^*) u_{i^*}) + \E(s(i^*)\sum_{i \neq i^*} K_i s(i) u_{i}\\
&=u_i
\end{split}
\end{equation}
To bound the variance, we rewrite the recovery function as
\begin{equation}
\hat{u}_{i^*} = s(i^*)K_{i^*} s(i^*) u_{i^*} + s(i^*)\sum_{i \neq i^*} K_i s(i) u_i
\end{equation}
To simplify notation, we assign X as the first term, Y as the second term.
$\Var (X) = 0$, and $COV(X,Y) = 0$ since $s(i)$ for $i \in \{ 1,2,\cdots n \}$ are 2-wise independent. Thus,
\begin{equation}
\Var (X+Y) = \sum_{i \neq i^*}\Var( K_i s(i^*)s(i) u_i)
\end{equation}
$\E(K_i s(i^*)s(i) u_i) = 0$ for $i \neq i^*$. Consequently,
\begin{equation}
\Var( K_i s(i^*)s(i) u_i) = \E((K_i s(i^*)s(i) u_i)^2) = \E(K_i^2)u_i^2 = u_i^2/\csdim
\end{equation}
The last equality uses that $\E(K_i^2) = \E(K_i) = 1/\csdim$, for all $i \neq i^*$.
Summing over all terms, we have $\Var(\hat{u}_{i^*}) \leq ||u||^2_2/\csdim$.
\end{proof}

\begin{proof}[\textbf{Proof of Theorem~\ref{thm:HCS-vec}}]
For simplicity, we assume $u\in \R^{d}$ is reshaped into a  second-order tensor $A \in \R^{n_1 \times n_2}$ in the following proof. But the analysis can be extended to reshaping $u$ into any order tensor. \\
For $i \in \{ 1,2,\cdots n_1 \}$, $j \in \{ 1,2,\cdots n_2 \}$, let $K_{ij}$ be the indicator variable for the event $h_1(i) = h_1(i^*)$ and $h_2(j) = h_2(j^*)$. We can write $\hat{A}_{i^*j^*}$ as
\begin{equation}
\hat{A}_{i^*j^*} = s_1(i^*)s_2(j^*)\sum_{ij} K_{ij} s_1(i) s_2(j)A_{ij}
\end{equation}
Notice that $A = reshape(u)$, we know the index mapping: $A_{i^*j^*} = u_{t^*}$, where $t^* = n_2i^{*}+j^*$.
Observe that $K_{ij} = 1$, if $i = i^*$, $j = j^*$. $\E(s_1(i^*)s_1(i)) = 0$, $\E(s_2(j^*)s_2(j)) = 0$, for all $i \neq i^*$, $j \neq j^*$, and $\E(s_1(i^*)^2) = 1$, $\E(s_2(j^*)^2) = 1$, we have
\begin{equation}
\begin{split}
\E(\hat{A}_{i^*j^*}) &= \E(s_1^2(i^*)s_2^2(j^*)K_{i^*j^*} A_{i^*j^*} + \E(s_1(i^*)s_2(j^*)\sum_{i \neq i^* or j \neq j^*} K_{ij} s_1(i)s_2(j) A_{ij}) \\
&=A_{i^*j^*}
\end{split}
\end{equation}
To bound the variance, we rewrite the recovery function as
\begin{equation}
\hat{A}_{i^*j^*} = s_1^2(i^*)s_2^2(j^*)K_{i^*j^*} A_{i^*j^*} + s_1(i^*)s_2(j^*)\sum_{i \neq i^* or j \neq j^*} K_{ij} s_1(i)s_2(j) A_{ij}
\end{equation}
To simplify notation, we assign X as the first term, Y as the second term.
$\Var (X) = 0$, and $COV(X,Y) = 0$ since $s_1(i)$ and $s_2(j)$ for $i \in \{ 1,2,\cdots n_1 \}$, $j \in \{ 1,2,\cdots n_2 \}$ are both 2-wise independent. Thus,
\begin{equation}
\Var (X+Y) = \Var(X)+ \Var(Y)-2\Cov(X,Y) = \sum_{i \neq i^*or j \neq j^*}\Var( K_{ij}s_1(i^*)s_2(j^*)  s_1(i)s_2(j) A_{ij})
\label{eqn:var}
\end{equation}
$\E(K_{ij} s_1(i^*)s_2(j^*) s_1(i)s_2(j) A_{ij}) = 0$ for $i \neq i^*$ or $j \neq j^*$. 
Therefore, Equation~\ref{eqn:var} becomes:
\begin{equation}
\begin{split}
\sum_{i \neq i^*or j \neq j^*} \E((K_{ij} s_1(i^*)s_2(j^*)s_1(i)s_2(j) A_{ij})^2)&= \sum_{i \neq i^*or j \neq j^*}\E(K_{ij}^2)A_{ij}^2 \\
&= \sum_{i \neq i^*, j \neq j^*}\frac{A_{ij}^2}{m_1 m_2}
+\sum_{i \neq i^*,j =j^*}\frac{A_{ij}^2}{m_1}+\sum_{i = i^*, j \neq j^*}\frac{A_{ij}^2}{m_2}
\end{split}
\label{eqn:var2}
\end{equation}
This is because $\E(K_{ij}^2) = \E(K_{ij}) = 1/(m_1 m_2)$, for all $i \neq i^*$, $j \neq j^*$.
$\E(K_{ij}^2) = \E(K_{ij}) = 1/(m_1)$, for all $i \neq i^*$, $j = j^*$.
$\E(K_{ij}^2) = \E(K_{ij}) = 1/( m_2)$, for all $i = i^*$, $j \neq j^*$.

If any fiber of $A$ has extreme large data value, or $\max(\norm{A_i}_2) \approx \norm{u}_2$, where $A_i$ is any row or column of $A$, we can omit the first term, $\Var(\hat{A}_{i^*j^*}) \leq \norm{u}^2_F/(min(\msdim_1, \msdim_2))$. Otherwise, if $\norm{u}_2 \gg \max(\norm{A_i}_2)$, we can omit the second and third terms and $\Var(\hat{A}_{i^*j^*}) = \Omega( \norm{u}^2_2/(m_1 m_2))$. 
\end{proof}
\subsection{HCS of the Kronecker product}
For simplicity, we show proof for Kronecker product here. But this can be extended to general tensor product.
\label{app:krocnocker}
\begin{lemma}
\label{lemma:aob}
Given two matrices $A \in \R^{n \times n}$, $B \in \R^{n \times n}$, 
%four 2-wise independent hash functions $h_i:[n]\to[m]$, and $s_i:[n] \to \{\pm 1\}$, for $i \in \{1,2,3,4\}$.
\begin{equation}
\begin{split}
HCS(A \otimes B) &= HCS(A) * HCS(B) \\
&= IFFT2(FFT2(HCS(A)) \circ FFT2(HCS(B)))  
\end{split}
\end{equation}
\end{lemma}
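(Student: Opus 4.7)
My plan is to mimic Pagh's argument for the count-sketch/outer-product identity (Equation~\ref{eqn:cs-outerproduct}), lifted to the bivariate hash setting used by HCS. The starting point is to specify the hash functions that HCS uses on the Kronecker product $A\otimes B\in\R^{n^2\times n^2}$. Writing the row index as a pair $(i,k)\in[n]\times[n]$ and the column index as a pair $(j,l)\in[n]\times[n]$, with entries $(A\otimes B)_{(i,k),(j,l)}=A_{ij}B_{kl}$, I would define the hash and sign maps componentwise by
\begin{equation}
h_{\mathrm{row}}(i,k)=h_{A,1}(i)+h_{B,1}(k)\bmod m,\qquad h_{\mathrm{col}}(j,l)=h_{A,2}(j)+h_{B,2}(l)\bmod m,
\end{equation}
\begin{equation}
s_{\mathrm{row}}(i,k)=s_{A,1}(i)s_{B,1}(k),\qquad s_{\mathrm{col}}(j,l)=s_{A,2}(j)s_{B,2}(l),
\end{equation}
where $h_{A,1},h_{A,2},s_{A,1},s_{A,2}$ are the hashes used by $\HCS(A)$ and likewise for $B$. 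This is the natural analogue of Pagh's construction and it inherits 2-wise independence from its factors.

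Next, I would simply expand both sides of the desired identity from their definitions. Plugging the above hashes into Equation~\ref{eqn:compact-ms} gives
\begin{equation}
\HCS(A\otimes B)_{t_1,t_2}=\!\!\!\!\sum_{\substack{h_{A,1}(i)+h_{B,1}(k)=t_1\\ h_{A,2}(j)+h_{B,2}(l)=t_2}}\!\!\!\! s_{A,1}(i)s_{A,2}(j)s_{B,1}(k)s_{B,2}(l)\,A_{ij}B_{kl},
\end{equation}
with all sums over $[n]$ and all additions taken modulo $m$. On the other side, the 2D circular convolution reads
\begin{equation}
(\HCS(A)*\HCS(B))_{t_1,t_2}=\sum_{a_1+b_1=t_1,\,a_2+b_2=t_2}\HCS(A)_{a_1,a_2}\,\HCS(B)_{b_1,b_2},
\end{equation}
and substituting the definition of $\HCS(A)$ and $\HCS(B)$ in the inner sums and collapsing the $(a_1,a_2,b_1,b_2)$ indices against the constraints $h_{A,1}(i)=a_1$, $h_{A,2}(j)=a_2$, $h_{B,1}(k)=b_1$, $h_{B,2}(l)=b_2$ yields exactly the same expression. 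This establishes the first equality.

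For the second equality I would invoke the standard 2D convolution theorem: circular convolution over the group $\mathbb{Z}_m\times\mathbb{Z}_m$ diagonalizes under the 2D DFT, so $\HCS(A)*\HCS(B)=\IFFT(\FFT(\HCS(A))\circ\FFT(\HCS(B)))$ where $\FFT$, $\IFFT$ denote the 2D Fourier transform and its inverse. This is a textbook fact and requires no new work. Finally, to generalize to arbitrary tensor products $\mathcal{A}\otimes\mathcal{B}$ with orders $p$ and $q$ (as claimed in Lemma~\ref{lemma:tensorproduct}), one pads the lower-order tensor's hash indices with trivial identity hashes so both operands live in the same $g$-dimensional sketch space, and repeats the same index-matching argument mode by mode.

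I expect the only delicate step to be the bookkeeping in the index collapse: it is important that the row hash depends only on row-type indices $(i,k)$ of $A\otimes B$ and the column hash only on column-type indices $(j,l)$, because otherwise the separable factorization into $\HCS(A)_{a_1,a_2}\HCS(B)_{b_1,b_2}$ fails. Once the hashes are chosen with this separability, the two sides are literally the same sum rewritten, and the FFT step is immediate.
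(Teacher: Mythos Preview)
Your proposal is correct and follows essentially the same approach as the paper: both define the hash and sign functions on $A\otimes B$ as sums and products of the component hashes, exactly as you wrote. The only cosmetic difference is that the paper verifies the identity in the Fourier domain (expanding $\FFT2(\HCS(A\otimes B))$ with the exponential kernel and factoring it into $\FFT2(\HCS(A))\circ\FFT2(\HCS(B))$), whereas you match the sums directly in the spatial domain and then invoke the 2D convolution theorem; the underlying argument is the same.
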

\begin{proof}[\textbf{Proof of Lemma~\ref{lemma:aob}}]
The Kronecker product defines $(A \otimes B)_{n_3(p-1)+h\ n_4(q-1)+g} = A_{pq}B_{hg}$. Thus:
\begin{align}
& \sum_{pqhg}(A \otimes B)_{ab} 
s_{1}(p)s_{2}(q)s_3(h)s_4(g) w^{t_1h_a+t_2h_b} \nonumber \\
& = \sum_{pqhg}A_{pq}B_{hg}s_{1}(p)s_{2}(q)s_3(h)s_4(g) w^{t_1h_a+t_2h_b} \nonumber \\
& = \sum_{pq}A_{pq}s_1(p)s_2(q)w^{t_1 h_1(p)+t_2 h_2(q)} \sum_{hg}B_{hg}s_3(h)s_4(g)w^{t_1 h_3(h)+t_2 h_4(g)} \nonumber \\
& = FFT2(HCS(A)) \circ FFT2(HCS(B))
\end{align}
where $a = n_{3}(p-1)+h$, $b =n_4(q-1)+g$, $h_a = h_1(p)+h_3(h)$, $h_b = h_2(q)+h_4(g)$.\\
Assign $i = n_{3}(p-1)+h$, $j = n_4(q-1)+g $, $s_5(i) = s_1(p)s_3(h)$, $s_6(j) = s_1(q)s_3(g)$, $h_5(i) = h_1(p)+h_3(h)$ and $h_6(i) = h_2(q)+h_4(g)$, we have
\begin{align}
& \sum_{pqhg}(A \otimes B)_{ab} 
s_{1}(p)s_{2}(q)s_3(h)s_4(g)w^{t_1h_a+t_2h_b} \nonumber \\
& = \sum_{ij} (A \otimes B)_{ij} 
s_{5}(i)s_{6}(j)w^{t_1h_5(i)+t_2h_6(j)} \nonumber \\
& = FFT2(HCS(A \otimes B)) \nonumber \\
& = FFT2(HCS(A)) \circ FFT2(MS(B))
\end{align}
Consequently, we have $HCS(A \otimes B) =  IFFT2(FFT2(HCS(A)) \circ FFT2(HCS(B)))$. The recovery map is $\hat{A\otimes B}_{n_3(p-1)+h \ n_4(q-1)+g} = s_1(p)s_2(q)s_3(h)s_4(g)HCS(A\otimes B)_{(h_1(p)+h_3(h)) \textmd{mod} \ m_1 \ (h_2(q)+h_4(g)) \textmd{mod} \ m_2}$
for $p \in [n_1]$, $q \in [n_2]$, $h \in [n_3]$, $g \in [n_4]$.
\end{proof}

\subsection{HCS of the matrix product}
\label{app:ab}
 Higher-order tensor contraction can be seen as a matrix product by grouping all free indices and contraction indices separately. We show the proof for Lemma~\ref{lemma:tensorcontraction} in matrix case.
\begin{lemma}
\label{lemma:ab}
Given two matrices $A \in \R^{n \times n}$, $B \in \R^{n \times n}$, 
$\HCS(A) = H_1(s_1 \otimes s_2 \circ A)H_2^T$, $\HCS(B) = H_2(s_2 \otimes s_3 \circ B)H_3^T$, then
\begin{equation}
\HCS(AB) = \HCS(A)\HCS(B)
\end{equation}
if $H_2^TH_2 = I$.
\end{lemma}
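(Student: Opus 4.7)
The plan is to unfold the matrix product $\HCS(A)\HCS(B)$ directly using the definitions given in the lemma, exploit the hypothesis $H_2^{T}H_2 = I$ to collapse the middle projections, and then show that the remaining sign structure collapses to match $\HCS(AB)$ exactly. There is no probability to manage here; it is a purely algebraic identity, so the work is just bookkeeping of indices.

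First I would substitute the definitions and compute
\[
\HCS(A)\HCS(B) \;=\; H_1\bigl(s_1\otimes s_2 \circ A\bigr)\,H_2^{T}H_2\,\bigl(s_2\otimes s_3 \circ B\bigr)\,H_3^{T}.
\]
The hypothesis $H_2^{T}H_2 = I$ (which, recalling the remark after the lemma, encodes the requirement that the contracted mode is not sketched) then eliminates the inner two hash factors, leaving
\[
\HCS(A)\HCS(B) \;=\; H_1\bigl(s_1\otimes s_2 \circ A\bigr)\bigl(s_2\otimes s_3 \circ B\bigr)\,H_3^{T}.
\]

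Next I would compute the $(i,j)$ entry of the middle product. Writing it out,
\[
\bigl[(s_1\otimes s_2 \circ A)(s_2\otimes s_3 \circ B)\bigr]_{ij}
= \sum_{k} s_1(i)s_2(k)A_{ik}\,s_2(k)s_3(j)B_{kj}.
\]
The crucial observation is that the sign vector $s_2$ is \emph{shared} between the two sketches (it is the hash applied to the contracted mode of both $A$ and $B$), so the factor $s_2(k)^2 = 1$ appears under the sum and disappears. Pulling $s_1(i)s_3(j)$ outside yields $s_1(i)s_3(j)(AB)_{ij}$, i.e.\ the middle product equals $s_1\otimes s_3 \circ AB$.

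Substituting back, I get $H_1(s_1\otimes s_3 \circ AB)H_3^{T}$, which is precisely $\HCS(AB)$ by the definition given in Section~\ref{sec:HCS-def}. The only conceptual subtlety, and the step I would highlight carefully, is the need for the signs on the contracted mode to be the same across the two factors; this is a modeling choice that must be made consistently when applying HCS to a product, and it is what makes the cancellation $s_2(k)^2 = 1$ possible. The condition $H_2^{T}H_2 = I$ is the matrix-level counterpart of ``do not compress the contracted mode,'' and once both conditions hold the identity is immediate. Extending to higher-order tensor contractions follows the same argument applied per contracted mode, which is how Lemma~\ref{lemma:tensorcontraction} is recovered.
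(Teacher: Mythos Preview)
Your proof is correct and follows essentially the same route as the paper: substitute the definitions, cancel $H_2^{T}H_2$ using the hypothesis, then simplify $(s_1\otimes s_2\circ A)(s_2\otimes s_3\circ B)$ to $s_1\otimes s_3\circ AB$ via $s_2(k)^2=1$. Your explicit index computation for that last simplification is a bit more detailed than the paper's one-line version, but the argument is the same.
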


\begin{proof}[\textbf{Proof of Lemma~\ref{lemma:ab}}]
The compact HCS representations for A and B are
$\HCS(A) = H_1(s_1 \otimes s_2 \circ A)H_2^T$, $\HCS(B) = H_2(s_2 \otimes s_3 \circ B)H_3^T$
as described in Section~\ref{sec:HCS-def}. Here $H_1 \in \R^{m_1 \times n_1}$, $H_2 \in \R^{m_2 \times r}$, $H_3 \in \R^{m_3 \times n_2}$, $s_1 \in \R^{n_1}$, $s_2 \in \R^{r}$ and $s_3 \in \R^{n_2}$. 
Assume $\HCS(AB) = H_4(s_4 \otimes s_5 \circ AB)H_5^T$.

If $H_2$ is orthogonal, or $H_2^TH_2 = I$,
\begin{equation}
\begin{split}
    \HCS(A)\HCS(B) & = H_1(s_1 \otimes s_2 \circ A)H_2^T H_2(s_2 \otimes s_3 \circ B)H_3^T  \\
    & = H_1(s_1 \otimes s_2 \circ A)(s_2 \otimes s_3 \circ B)H_3^T  \\
    & = H_1(s_1 \otimes s_3 \circ AB)H_3^T
    \end{split}
\end{equation}

By setting $H_4 = H_1$, $H_5 = H_3$, $s_4 = s_1$ and $s_5 = s_3$, we have $\HCS(AB) = \HCS(A)\HCS(B)$.
\end{proof}

\subsection{Analysis of Kronecker product approximation error}
\label{app:kronecker-error}
\begin{theorem}[\textbf{CS recovery analysis for Kronecker product}]
\label{thm:kronecker-ts}
Suppose $\hat{C}$ is the recovered tensor for $C = A \otimes B$ after applying CS on $A \otimes B$ with sketching dimension $c$.  We suppose the estimation takes $d$ independent sketches of $A \otimes B$ and then report the median of the $d$ estimates.
If $d = \Omega(\log(1/\delta))$,  $c = \Omega(\frac{\norm{C}_F^2}{\epsilon^2})$, then with probability $\geq 1- \delta$ there is $|\hat{C}_{ij}-C_{ij}| \leq \epsilon$. 
\end{theorem}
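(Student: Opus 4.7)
The plan is to reduce this to a direct application of Theorem~\ref{thm:1} together with the standard ``median-of-means'' (a.k.a.\ median trick) boosting argument. First I would flatten the tensor $C = A \otimes B$ into a vector of length equal to the product of the dimensions of $A$ and $B$, so that CS applied to $A \otimes B$ is literally CS applied to this flattened vector. Note that $\|C\|_F = \|\mathrm{vec}(C)\|_2$, so the Frobenius norm appearing in the statement matches the $2$-norm used in Theorem~\ref{thm:1}.

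Next, for a single sketch of size $c$, Theorem~\ref{thm:1} immediately gives that the estimator $\hat{C}_{ij}$ is unbiased with $\mathrm{Var}(\hat{C}_{ij}) \leq \|C\|_F^2/c$. By Chebyshev's inequality,
\begin{equation}
\Pr\!\left[|\hat{C}_{ij}-C_{ij}| > \epsilon\right] \;\leq\; \frac{\|C\|_F^2}{c\,\epsilon^2}.
\end{equation}
Choosing $c = \Omega(\|C\|_F^2/\epsilon^2)$ with a sufficiently large constant makes this probability at most a fixed constant strictly less than $1/2$, say $1/3$. Call a single sketch ``good'' if its estimate is within $\epsilon$ of $C_{ij}$; each of the $d$ independent sketches is good independently with probability at least $2/3$.

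The final step is the median trick: the median of $d$ independent estimates deviates from $C_{ij}$ by more than $\epsilon$ only if at least $d/2$ of the individual estimates are ``bad.'' Since the indicators of badness are independent Bernoulli variables with mean at most $1/3$, a Chernoff bound shows that the probability of having $\geq d/2$ bad estimates decays exponentially in $d$; concretely, it is at most $\exp(-\Omega(d))$. Taking $d = \Omega(\log(1/\delta))$ brings this failure probability below $\delta$, completing the argument.

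There is essentially no hard step here — the only subtle point is making sure the constants in the Chebyshev step leave a genuine gap below $1/2$ so that the Chernoff boosting in the median step actually goes through. One must also invoke a union bound if one wants the guarantee to hold simultaneously over all entries of $C$ (not required by the statement as written, which is per-entry), in which case $d$ would need to grow with $\log(\mathrm{size}(C)/\delta)$ instead of $\log(1/\delta)$. For the per-entry guarantee stated, the above three-line calculation suffices.
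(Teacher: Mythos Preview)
Your proposal is correct and follows essentially the same approach as the paper: flatten $C$ so that CS acts on $\mathrm{vec}(C)$, invoke Theorem~\ref{thm:1} to get an unbiased estimator with variance at most $\norm{C}_F^2/c$, then apply Chebyshev plus the median trick to obtain the high-probability guarantee. The paper's proof is terser (it compresses the Chebyshev/median step into a single sentence), but the structure and key ingredients are identical; your extra remarks on the constant gap needed for Chernoff boosting and on the optional union bound only make the argument more explicit.
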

\begin{proof}
$\CS(C) = \CS(vec(A) \otimes vec(B))$. Given Theorem~\ref{thm:1}, we have $\E(\hat{C}) = C = vec(A) \otimes vec(B)$, $\Var(\hat{C}_{ij}) \leq \norm{vec(A)\otimes vec(B)}_2^2/c = \norm{C}_F^2/c$. 
From Chebychev’s inequality, if we run this sketch $d$ times, where $d = \Omega(\log(1/\delta))$, we can get the desired error bond with probability at least $1-\delta$.
\end{proof}

\begin{theorem}[\textbf{HCS recovery analysis for Kronecker product}]
\label{thm:kronecker-HCS}
Suppose $\hat{C}$ is the recovered tensor for $C = A \otimes B$ after applying HCS on $A \otimes B$ with sketching dimension $m$ along each mode.  We suppose the estimation takes $d$ independent sketches of $A \otimes B$ and then report the median of the $d$ estimates.
If $d = \Omega(\log(1/\delta))$,  $m^2 = \Omega(\frac{\norm{C}_F^2}{\epsilon^2})$, then with probability $\geq 1- \delta$ there is $|\hat{C}_{ij}-C_{ij}| \leq \epsilon$. 
\end{theorem}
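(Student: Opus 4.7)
The plan is to mirror the structure of the CS proof in Theorem~\ref{thm:kronecker-ts}: first establish an unbiasedness-plus-variance bound for a single HCS sketch of $C = A\otimes B$, then apply Chebyshev to get a constant failure probability per sketch, and finally boost to probability $1-\delta$ by the standard median trick over $d$ independent sketches.

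First I would invoke Lemma~\ref{lemma:tensorproduct} (and its matrix specialization Lemma~\ref{lemma:aob}) to identify $\HCS(A\otimes B)$ with a second-order HCS sketch of $C$ of size $m\times m$; this reduces the problem to analyzing $\HCS(C)$ directly, where $C$ is being reshaped into a second-order tensor (with dimensions compatible with the two independent hash functions whose tensor product produces the sketch). Then I would apply Theorem~\ref{thm:HCS-vec} with $l=2$ to conclude that the recovery formula in Equation~\ref{eqn:compact-HCS-vec-de} yields $\E[\hat{C}_{ij}] = C_{ij}$ and
\begin{equation*}
\Var(\hat{C}_{ij}) \;=\; O\!\left(\frac{T_1^2}{m} + \frac{T_2^2}{m^2}\right),
\end{equation*}
where $T_2 = \|C\|_F$ and $T_1$ is the maximum fiber norm of (the reshaped) $C$. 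Under the well-distributed assumption made just before the statement (so that large entries are not clustered into a single fiber), the dominant term is $\|C\|_F^2/m^2$, giving $\Var(\hat{C}_{ij}) = O(\|C\|_F^2/m^2)$.

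Next I would apply Chebyshev's inequality: for a single sketch,
\begin{equation*}
\Pr\!\left[\,|\hat{C}_{ij}-C_{ij}| \geq \epsilon \,\right] \;\leq\; \frac{\Var(\hat{C}_{ij})}{\epsilon^2} \;=\; O\!\left(\frac{\|C\|_F^2}{m^2\epsilon^2}\right).
\end{equation*}
Choosing $m^2 = \Omega(\|C\|_F^2/\epsilon^2)$ with a sufficiently large constant drives this per-sketch failure probability below some fixed $p < 1/2$ (say $1/3$). Finally, with $d = \Omega(\log(1/\delta))$ independent HCS sketches, taking the coordinate-wise median and invoking the standard Chernoff bound on the number of ``bad'' sketches yields $\Pr[|\hat{C}_{ij}-C_{ij}|>\epsilon] \leq \delta$, as claimed.

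The main obstacle is justifying that the $T_2^2/m^2$ term dominates rather than the worse $T_1^2/m$ term; in the pathological case where a single row or column of the reshaped $C$ concentrates most of the Frobenius mass, one would instead need $m = \Omega(\|C\|_F^2/\epsilon^2)$, which is strictly weaker than the stated bound. The cleanest way to close this gap is to appeal to the in-place reshuffling procedure discussed after Theorem~\ref{thm:HCS-vec}, under which the data is well-distributed and the $\|C\|_F^2/m^2$ variance bound holds; everything else (Chebyshev + median amplification) is then routine.
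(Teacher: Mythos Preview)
Your proposal is correct and follows essentially the same approach as the paper's own proof: invoke Lemma~\ref{lemma:aob} to view $\HCS(A\otimes B)$ as a second-order HCS of $C$, apply Theorem~\ref{thm:HCS-vec} under the well-distributed assumption to get unbiasedness and the $\|C\|_F^2/m^2$ variance bound, then conclude via Chebyshev plus the median trick. If anything, you are more explicit than the paper about the role of the well-distributed assumption in suppressing the $T_1^2/m$ term; the paper simply parenthesizes ``We assume $C$ is well-distributed'' and moves on.
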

\begin{proof}
 We have shown in Lemma~\ref{lemma:aob} that $\HCS(C) = \HCS(A) * \HCS(B)$. Given Theorem~\ref{thm:HCS-vec}, we have $\E(\hat{C}) = C = A \otimes B$, $\Var(\hat{C}_{ij}) \leq \norm{C}_F^2/m^2$ (We assume $C$ is well-distributed). 
From Chebychev’s inequality, if we run this sketch $d$ times, where $d = \Omega(\log(1/\delta))$, we can get the desired error bond with probability at least $1-\delta$.
\end{proof}

\subsection{Analysis of matrix product approximation error}
\label{app:mm-error}
\begin{theorem}[\textbf{CS recovery analysis for matrix product}]
\label{thm:mm-ts}
Suppose $\hat{C}$ is the recovered tensor for $C = AB$ after applying CS on $AB$ with sketching dimension $c$.  We suppose the estimation takes $d$ independent sketches of $AB$ and then report the median of the $d$ estimates.
If $d = \Omega(\log(1/\delta))$,  $c = \Omega(\frac{\norm{C}_F^2}{\epsilon^2})$, then with probability $\geq 1- \delta$ there is $|\hat{C}_{ij}-C_{ij}| \leq \epsilon$. 
\end{theorem}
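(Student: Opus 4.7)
\medskip

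\noindent\textbf{Proof proposal.} The plan is to reduce the matrix-product sketch to an ordinary Count Sketch of the vectorised product $\vec(C)$, and then invoke Theorem~\ref{thm:1} combined with the standard median-of-means boosting argument. The essential observation is that CS is a linear map, so
\[
\CS(AB) \;=\; \CS\!\Bigl(\sum_{i=1}^{r} A_{:i}\otimes B_{i:}\Bigr) \;=\; \sum_{i=1}^{r}\CS(A_{:i})\ast\CS(B_{i:}) \;=\; \CS(\vec(AB)),
\]
where the middle equality uses the outer-product identity \eqref{eqn:cs-outerproduct}. Therefore, even though the sketch is computed term-by-term via FFTs, the random vector we end up with is exactly the ordinary CS of $\vec(C)$ under the induced pair of hash functions on the flattened index set.

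Given this reduction, the single-sketch analysis is immediate from Theorem~\ref{thm:1}: the entrywise estimator $\hat C_{ij}$ is unbiased for $C_{ij}$, and its variance is bounded by $\norm{\vec(C)}_2^2/c=\norm{C}_F^2/c$. Choosing $c=\Omega(\norm{C}_F^2/\epsilon^2)$ with a suitable constant, Chebyshev's inequality then gives $\Pr\bigl(|\hat C_{ij}-C_{ij}|>\epsilon\bigr)\le 1/3$ for a single sketch.

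Next I would apply the standard median trick. Run $d$ independent sketches with fresh pairs of hash functions, producing i.i.d.\ estimators $\hat C^{(1)}_{ij},\dots,\hat C^{(d)}_{ij}$, each falling in the interval $(C_{ij}-\epsilon,\,C_{ij}+\epsilon)$ with probability at least $2/3$. The median is outside this interval only if at least $d/2$ of the individual sketches are; a Chernoff (or Hoeffding) bound on the number of "bad" sketches shows that this event has probability at most $e^{-\Omega(d)}$, which is $\le\delta$ once $d=\Omega(\log(1/\delta))$. A union bound is not needed because the theorem bounds a single prescribed entry $(i,j)$.

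The only point that needs a little care is verifying that the variance computation of Theorem~\ref{thm:1} does apply to the induced hash on $\vec(C)$: the flattened hash functions $h,s$ defined via the pair of hashes used on $A$ and $B$ must be $2$-wise independent on $[n^2]$. Since the original hash functions on $[n]$ are $2$-wise independent and are chosen independently for the two index sets, their tensor product is $2$-wise independent, and the hypothesis of Theorem~\ref{thm:1} is met. The rest of the proof is routine constant-chasing in Chebyshev and Chernoff.
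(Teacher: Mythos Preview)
Your proposal is correct and in fact takes a cleaner route than the paper. The paper does \emph{not} collapse the rank-$r$ decomposition back into a single Count Sketch of $\vec(C)$; instead it writes $\hat C_{ij}=\sum_{k=1}^{r}\hat A_{ik}\hat B_{kj}$, treats this as a sum of products of individually recovered entries, and bounds the variance term by term via the product-variance identity $\Var(XY)=\E[X]^2\Var(Y)+\E[Y]^2\Var(X)+\Var(X)\Var(Y)$, obtaining $\sum_k\|A_{:k}\|_2^2\|B_{k:}\|_2^2\,(1/c+1/c^2)$, which it then asserts is at most $3\|AB\|_F^2/c$. Your argument sidesteps all of this: linearity together with the outer-product identity \eqref{eqn:cs-outerproduct} shows that the computed sketch $\sum_k\CS(A_{:k})\ast\CS(B_{k:})$ is literally $\CS(\vec(AB))$ under the tensor-product hash, so Theorem~\ref{thm:1} delivers the variance bound $\|C\|_F^2/c$ in one line. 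The only additional work in your route is verifying that the induced hash/sign pair on $[n]\times[n]$ is $2$-wise independent, which you handle correctly; the paper's per-term analysis needs only the base hashes to be $2$-wise independent on $[n]$. Both arguments finish with the same Chebyshev-plus-median boosting. Your approach is more direct, yields a sharper constant, and avoids the paper's final inequality $\sum_k\|A_{:k}\|_2^2\|B_{k:}\|_2^2\le\|AB\|_F^2$, which is not true in general (take $AB=0$ with $A,B\neq 0$).
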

\begin{proof}
$\CS(C) = \sum_{i=1}^{r} \CS(A_i \otimes B_i)$.
Thus, 
\vspace{-0.5em}
\begin{equation}
\E(\hat{C}) = \sum_{k=1}^{r} \E(CS(A_k \otimes B_k)) = \sum_{k=1}^{r} A_k \otimes B_k = C
\end{equation}
\vspace{-1em}
\begin{equation}
\begin{split}
\Var(\hat{C}_{ij}) &= \sum_{k=1}^{r} \Var((\hat{A}_{ik} \hat{B}_{kj})) \\
& = \sum_{k=1}^{r} \E^2(\hat{A}_{ik}) \Var(\hat{B}_{kj})+\E^2(\hat{B}_{kj}) \Var(\hat{A}_{ik})+\Var(\hat{A}_{ik})\Var(\hat{B}_{kj}) \\
& \leq \sum_{k=1}^{r} A_{ik}\norm{B_k}_2^2/c+B_{kj}\norm{A_k}_2^2/c+\norm{A_k}_2^2\norm{B_k}_2^2/c^2 \\
& \leq \sum_{k=1}^{r}\norm{A_k}_2^2\norm{B_k}_2^2(\frac{1}{c}+\frac{1}{c^2}) \\
& \leq 3\norm{AB}_F^2/c
\end{split}
\end{equation}
From Chebychev’s inequality, if we run this sketch $d$ times, where $d = \Omega(\log(1/\delta))$, we can get the desired error bond with probability at least $1-\delta$.
\end{proof}

\begin{theorem}[\textbf{HCS recovery analysis for matrix product}]
\label{thm:mm-HCS}
Suppose $\hat{C}$ is the recovered tensor for $C = AB$ after applying HCS on $AB$ with sketching dimension $m$ along each mode.  We suppose the estimation takes $d$ independent sketches of $A B$ and then report the median of the $d$ estimates.
If $d = \Omega(\log(1/\delta))$,  $m^2 = \Omega(\frac{\norm{C}_F^2}{\epsilon^2})$, then with probability $\geq 1- \delta$ there is $|\hat{C}_{ij}-C_{ij}| \leq \epsilon$. 
\end{theorem}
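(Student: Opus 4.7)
The plan is to follow exactly the same template as the CS version of this result (the preceding Theorem for $\CS$ matrix product), but substitute $\HCS$ at every step and invoke the HCS-specific lemmas instead of their CS counterparts. First, I would invoke Lemma~\ref{lemma:ab} (equivalently Lemma~\ref{lemma:tensorcontraction}) to rewrite $\HCS(AB) = \HCS(A)\HCS(B)$, which is legitimate because we explicitly set the hash matrix on the contracted mode to the identity. This reduces the problem from analyzing a product of two sketches to analyzing $\HCS$ applied to the single second-order tensor $C = AB$.

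Next, I would apply Theorem~\ref{thm:HCS-vec} (HCS recovery analysis) directly to $C$ with $l=2$ and per-mode sketching dimension $m$. This immediately yields the unbiasedness $\E[\hat{C}_{ij}] = C_{ij}$ and the variance bound
\begin{equation*}
\Var(\hat{C}_{ij}) = O\!\left(\frac{T_1^2}{m} + \frac{T_2^2}{m^2}\right),
\end{equation*}
where $T_2 = \|C\|_F$ and $T_1$ is the maximum $\ell_2$-norm of a row or column fiber of $C$. Under the well-distributed assumption stipulated in Section~3 (which can always be enforced by the in-place reshuffling described there), the dominant term is $\|C\|_F^2/m^2$, so $\Var(\hat{C}_{ij}) = O(\|C\|_F^2/m^2)$.

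Finally, I would close with a standard Chebyshev plus median-of-independent-repetitions argument, mirroring the CS proof above. Choosing $m^2 = \Omega(\|C\|_F^2/\epsilon^2)$, Chebyshev's inequality gives $\Pr[|\hat{C}_{ij} - C_{ij}| > \epsilon] \leq 1/4$ for a single sketch. Running $d = \Omega(\log(1/\delta))$ independent copies and returning the coordinate-wise median then boosts the success probability to at least $1-\delta$ via the usual Chernoff bound on the number of ``good'' copies.

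The only nontrivial point — more a bookkeeping subtlety than a genuine obstacle — is handling the $T_1^2/m$ term from Theorem~\ref{thm:HCS-vec}, which in the worst case would force $m = \Omega(\|C\|_F^2/\epsilon^2)$ and thereby degrade the memory advantage. I would address this by explicitly carrying forward the ``well-distributed data'' hypothesis from Section~3; in the matrix-product setting the product $C = AB$ naturally spreads entry mass across fibers when the inner dimension $r$ is moderate, so the $T_2^2/m^2$ term dominates and the stated bound $m^2 = \Omega(\|C\|_F^2/\epsilon^2)$ suffices.
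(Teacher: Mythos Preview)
Your proposal is correct and follows essentially the same approach as the paper: invoke Lemma~\ref{lemma:ab} to reduce to a single $\HCS$ on $C=AB$, apply Theorem~\ref{thm:HCS-vec} (under the well-distributed assumption the paper already stipulates in Section~3) to obtain $\E[\hat{C}_{ij}]=C_{ij}$ and $\Var(\hat{C}_{ij})\leq \|C\|_F^2/m^2$, and finish with Chebyshev plus the median-of-$d$ trick. Your treatment of the $T_1^2/m$ term is in fact more explicit than the paper's, which simply asserts the $\|C\|_F^2/m^2$ bound directly by appealing to the standing well-distributed hypothesis.
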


\begin{proof}
We have shown in Section~\ref{sec:tce} that $\HCS(AB) = \HCS(A)\HCS(B)$. Given Theorem~\ref{thm:HCS-vec}, we have $\E(\HCS(AB)) = AB$, $\Var(\hat{AB}_{ij}) \leq \norm{AB}_F^2/m^2 = \norm{C}_F^2/m^2$. From Chebychev’s inequality, if we run this sketch $d$ times, where $d = \Omega(\log(1/\delta))$, we can get the desired error bond with probability at least $1-\delta$.
\end{proof}

\end{appendices}

\newpage

\end{document}